\def\eqref#1{equation~\ref{#1}}
\def\1{\bm{1}}
\def\vw{{\bm{w}}}
\DeclareMathAlphabet{\mathsfit}{\encodingdefault}{\sfdefault}{m}{sl}
\SetMathAlphabet{\mathsfit}{bold}{\encodingdefault}{\sfdefault}{bx}{n}
\def\gA{{\mathcal{A}}}
\def\gH{{\mathcal{H}}}
\def\gM{{\mathcal{M}}}
\def\gO{{\mathcal{O}}}
\def\gQ{{\mathcal{Q}}}
\def\gX{{\mathcal{X}}}
\def\sA{{\mathbb{A}}}
\def\sM{{\mathbb{M}}}
\def\sP{{\mathbb{P}}}
\def\sR{{\mathbb{R}}}
\def\sZ{{\mathbb{Z}}}
\newcommand{\E}{\mathbb{E}}
\newcommand{\Var}{\mathrm{Var}}
\newcommand{\Cov}{\mathrm{Cov}}
\DeclareMathOperator*{\argmax}{arg\,max}
\DeclareMathOperator*{\argmin}{arg\,min}
\newtheorem{theorem}{Theorem}
\newtheorem{proposition}{Proposition}
\newtheorem{lemma}{Lemma}
\newcolumntype{L}[1]{>{\raggedright\let\newline\\\arraybackslash\hspace{0pt}}m{#1}}
\newcolumntype{C}[1]{>{\centering\let\newline\\\arraybackslash\hspace{0pt}}m{#1}}
\newcolumntype{R}[1]{>{\raggedleft\let\newline\\\arraybackslash\hspace{0pt}}m{#1}}
\title{Robustifying and Boosting Training-Free\\Neural Architecture Search}
\author{Zhenfeng He\textsuperscript{1},\quad Yao Shu\textsuperscript{2}\thanks{Corresponding author.},\quad  Zhongxiang Dai\textsuperscript{3},\quad Bryan Kian Hsiang Low\textsuperscript{1}\\
\textsuperscript{1}Department of Computer Science, National University of Singapore \\
\textsuperscript{2}Guangdong Lab of AI and Digital Economy (SZ) \\ \textsuperscript{3}Laboratory for Information and Decision Systems, Massachusetts Institute of Technology \\
\texttt{he.zhenfeng@u.nus.edu, shuyao@gml.ac.cn, daizx@mit.edu,} \\
\texttt{lowkh@comp.nus.edu.sg}
}
\newcommand{\alg}{{\fontfamily{ppl}\selectfont RoBoT}}
\newcommand{\pearson}{\text{Pearson}}
\newcommand{\precision}{\text{T}}
\begin{document}

\iclrfinalcopy

\maketitle
\begin{abstract}
    \emph{Neural architecture search} (NAS) has become a key component of AutoML and a standard tool to automate the design of deep neural networks. Recently, training-free NAS as an emerging paradigm has successfully reduced the search costs of standard training-based NAS by estimating the true architecture performance with only training-free metrics. Nevertheless, the estimation ability of these metrics typically varies across different tasks, making it challenging to achieve robust and consistently good search performance on diverse tasks with only a single training-free metric. Meanwhile, the estimation gap between training-free metrics and the true architecture performances limits training-free NAS to achieve superior performance. To address these challenges, we propose the \emph{\underline{ro}bustifying} \textit{and} \textit{\underline{bo}osting} \textit{\underline{t}raining-free} \textit{NAS} (\alg{}) algorithm which \textit{(a)} employs the optimized combination of existing training-free metrics explored from Bayesian optimization to develop a robust and consistently better-performing metric on diverse tasks, and \textit{(b)} applies greedy search, i.e., the exploitation, on the newly developed metric to bridge the aforementioned gap and consequently to boost the search performance of standard training-free NAS further. Remarkably, the expected performance of our \alg{} can be theoretically guaranteed, which improves over the existing training-free NAS under mild conditions with additional interesting insights. Our extensive experiments on various NAS benchmark tasks yield substantial empirical evidence to support our theoretical results. Our code has been made publicly available at \url{https://github.com/hzf1174/RoBoT}. 
\end{abstract}
\section{Introduction}
In recent years, deep learning has witnessed tremendous advancements, with applications ranging from computer vision \citep{caelles2017one} to natural language processing \citep{vaswani_attention_2017}. These advancements have been largely driven by sophisticated deep neural networks (DNNs) like AlexNet \citep{krizhevsky2017imagenet}, VGG \citep{simonyan2014very}, and ResNet \citep{resnet}, which have been carefully designed and fine-tuned for specific tasks. However, crafting such networks often demands expert knowledge and a significant amount of trial and error. To mitigate this manual labor, the concept of neural architecture search (NAS) was introduced, aiming to automate the process of architecture design. While numerous training-based NAS algorithms have been proposed and have demonstrated impressive performance \citep{zoph2016neural, pham2018efficient}, they often require significant computational resources for performance estimation, as it entails training DNNs. 
More recently, several \emph{training-free} metrics have been proposed to address this issue \citep{mellor2021neural, abdelfattah2020zero, nasi}. These metrics are computed with at most a forward and backward pass from a single mini-batch of data, thus the computation cost can be deemed negligible compared with previous NAS methods, hence training-free.

However, there are two questions that still need to be investigated. While training-free metrics have demonstrated promising empirical results in specific, well-studied benchmarks such as NAS-Bench-201 \citep{nasbench201}, recent studies illuminated their inability to maintain consistent performance across diverse tasks  \citep{colin-blog}. Moreover, as there is an estimation gap between the training-free metrics and the true architecture performance, how to quantify and bridge this gap remains an open question. We elaborate on the details of these questions in Section~\ref{sec:motivation}.

To address the questions, we propose \emph{\underline{ro}bustifying and \underline{bo}osting \underline{t}raining-free neural architecture search} (\alg{}). We explain the details of \alg{} in Section~\ref{sec:methodology}. First, we use a weighted linear combination of training-free metrics to propose a new metric with better estimation performance, where the weight vectors are explored and optimized using \emph{Bayesian optimization} (BO) (Section~\ref{sec:robustifying},~\ref{sec:explore}). Second, we quantify the estimation gap of the new metric using \emph{Precision value} and then bridge the gap with greedy search. By doing so, we exploit the new metric, hence boosting the expected performance of the proposed neural architecture (Section~\ref{sec:exploit}).

To understand the expected performance of our proposed algorithm, we make use of the theory from partial monitoring \citep{partial-monitoring, linear-partial-monioring} to provide theoretical guarantees for our algorithm (Section~\ref{sec:discussion}). To the best of our knowledge, we are the first to provide a bounded expected performance of a NAS algorithm utilizing training-free metrics with mild assumptions. We also discuss the factors that influence the performance. Finally, we conduct extensive experiments to ensure that our proposed algorithm outperforms other NAS algorithms, and achieves competitive or even superior performance compared to any single training-free metric with the same search budget and use ablation studies to verify our claim (Section~\ref{sec:experiments}).

\section{Related Work}
\paragraph{Training-free NAS} Recently, several training-free metrics have emerged, aiming to estimate the generalization performance of neural architectures. This advancement allows NAS to bypass the model training process entirely. For example, \citet{mellor2021neural} introduced a heuristic metric that leverages the correlation of activations in an initialized DNN. Similarly, \citet{abdelfattah2020zero} established a significant correlation between previously employed training-free metrics in network pruning, such as \textit{snip}, \textit{grasp} and \textit{synflow}. \citet{ning2021evaluating} demonstrated that even simple proxies like \textit{params} and \textit{flops} possess significant predictive capabilities and serve as baselines for training-free metrics. Despite these developments, \citet{colin-blog} noted that no existing training-free metric consistently outperforms others, emphasizing the necessity for more robust results.

\paragraph{Hybrid NAS} A few initial works have sought to combine the training-free metrics information and training-based searches. Some methods, such as OMNI \citep{white2021powerful} and ProxyBO \citep{proxybo}, employ a model-based approach, utilizing training-free metrics as auxiliary information to enhance their estimation performance. However, requiring modeling of the search space limits the flexibility when these methods are applied to diverse tasks, and the effectiveness of these methods is questionable when applied to new search space. Moreover, a comprehensive theoretical analysis regarding the influence of training-free metrics on their results is currently absent. Furthermore, \citet{hnas} directly boosts the performance of the gradient-based training-free metric, but their focus is confined to the enhancement of a single training-free metric, and thus is still subject to the instability inherent to individual training-free metrics. 

\section{Observations and Motivations} \label{sec:motivation}

Although training-free NAS has considerably accelerated the search process of NAS via estimating the architecture performance using only training-free metrics, recent studies have widely uncovered that these metrics fail to perform consistently well on different tasks \citep{colin-blog}. 
To illustrate this, we compare the performance of a list of training-free metrics on various tasks in TransNAS-Bench-101-micro \citep{tnb101}, as shown in Table~\ref{tab:rank-of-proposed-architecture}. The results show that no single training-free metric can consistently outperform the others, emphasizing the demand for achieving robust and consistent search results based on training-free metrics. Therefore, it begs the first research question \textbf{RQ1}: \textit{Can we robustify existing training-free metrics to make them perform consistently well on diverse tasks?}

The conventional approach to propose a neural architecture using training-free metrics involves selecting the architecture with the highest score. However, this method is potentially limited due to the \textit{estimation gap} in using training-free metrics to estimate the true architecture performance: There could be a superior architecture within the top-ranked segment of the training-free metric that does not necessarily possess the highest score. For instance, the results displayed in Table~\ref{tab:rank-of-proposed-architecture} demonstrate that the highest-score architecture proposed by the \textit{synflow} metric for the \textit{Object} task performs poorly. However, the optimal architecture within the entire search space can be identified by merely conducting further searches among the top 50 architectures. Despite these findings, no existing studies have explored the extent of this potential, or whether allocating a search budget to exploit training-free metrics would yield valuable results. Therefore, our second research question arises: \textbf{RQ2}: \textit{After the development of our robust metric, how can we quantify such an estimation gap and propose a method to bridge this gap for further boosted NAS?}

\begin{table*}[t!]
\caption{Validation ranking of the highest-score architecture for each training-free metric (the value before comma), and the ranking of the best-performing architecture on each training-free metric (the value after comma) on TransNAS-Bench-101-micro (4,096 architectures).}
\label{tab:rank-of-proposed-architecture}
\centering
\resizebox{\textwidth}{!}{
\begin{threeparttable}
\begin{tabular}{lcccccc}
\toprule
\textbf{Metrics} & Scene & Object & Jigsaw  & Segment. & Normal  \\
\midrule
grad\_norm \citep{abdelfattah2020zero} & \phantom{0}961, 862\phantom{0} & 2470, 1823 & 3915, 1203 & \phantom{0}\textbf{274}, 738\phantom{0} & 2488, 1447\\
snip \citep{snip} & 2285, 714\phantom{0} & 2387, 1472 & 2701, 883\phantom{0} & \phantom{0}951, 682\phantom{0} & 2488, 1581\\
grasp \citep{grasp} & \phantom{0}\textbf{372}, 3428 & 3428, 2922 & 2701, 2712 & 2855, 1107 & \phantom{0}665, \textbf{680}\phantom{0}\\
fisher \citep{fisher}& 2286, 1150 & 2470, 1257 & 2701, 1582  & 3579, 1715 & 4015, 1761\\
synflow \citep{synflow}& \phantom{0}509, 258\phantom{0} & 1661, \textbf{50}\phantom{0}\phantom{0} & 1691, 760\phantom{0}  & n/a & n/a\\
jacob\_cov \citep{mellor2021neural}& \phantom{0}433, 1207 & 2301, 2418 & \phantom{0}441, 1059 & \phantom{0}592, \textbf{323}\phantom{0} & \phantom{0}\textbf{205}, 2435\\
naswot \citep{mellor2021neural}& 2130, 345\phantom{0} & \textbf{1228}, 1466 & 2671, 626\phantom{0} & \phantom{0}709, 1016 & \phantom{0}442, 1375\\
zen \citep{lin2021zen}& \phantom{0}509,\textbf{106}\phantom{0} & 1661, 598\phantom{0} & 2552, \textbf{430}\phantom{0} & \phantom{0}830, 909\phantom{0} & \phantom{0}442, 1002\\
params & \phantom{0}773, 382\phantom{0} & 2387, 379\phantom{0} & \phantom{0}\textbf{231}, 494\phantom{0}  & \phantom{0}830, 1008 & \phantom{0}310, 1251\\
flops & \phantom{0}773, 366\phantom{0} & 2387, 427\phantom{0} & \phantom{0}\textbf{231}, 497\phantom{0}  & \phantom{0}830, 997\phantom{0} & \phantom{0}310, 1262\\
\bottomrule
\end{tabular}
\end{threeparttable}
} 
\vskip -0.1in
\end{table*}

\section{Our Methodology} \label{sec:methodology}
To address the aforementioned questions, we will use this section to elaborate on the method \alg{} we propose. Consider the search space as a set of $N$ neural architectures, i.e., $\sA = \{\gA_i\}_{i=1}^{N}$, where each architecture $\gA$ has a true architecture performance $f(\gA)$ based on the objective evaluation metric $f$. Without loss of generality, we assume a larger value of $f$ is more desirable. Alternatively, if the entire search space has been assessed by the objective evaluation metric $f$, we can sort their performance and obtain the rankings regarding $f$ as $R_f(\gA)$, where $R_f(\gA) = 1$ indicates $\gA$ is the optimal architecture. We also have a set of $M$ training-free metrics $\sM = \{\gM_i\}_{i=1}^M$, where a larger value of $\gM_i(\gA)$ suggests $\gA$ is more favorable by $\gM_i$. Notice that we define the search costs as the number of times querying the objective evaluation metric, as the computation costs of training-free metrics are negligible to the objective evaluation metric, given that the latter usually involves the training process of neural architectures. The goal of NAS is to find the optimal architecture with the search costs $T$.

\subsection{Robustifying Training-free NAS Metric} \label{sec:robustifying}
As discussed in \textbf{RQ1}, training-free metrics usually fail to perform consistently well across various tasks. To robustify them, we propose using an ensemble method to combine them, as ensemble methods are widely known as a tool to achieve a consistent performance \citep{zhou2012ensemble, shu2022neural}. We choose weighted linear combination as the ensemble method and discuss other choices in Appendix~\ref{sec:ablation_studies}. Formally, we define the weighted linear combination estimation metric with weight vector $\vw$ as $\gM(\gA; \vw) \triangleq \sum_{i=1}^M w_i\gM_i(\gA)$. To find the optimal architecture, we need to identify the most suitable weight vector $\vw^*$  as

\begin{equation}\label{eq:def-weighted-combination}
\begin{gathered}
\textstyle \vw^* = \argmax_{\vw} f(\gA(\vw)),\\
\textstyle\text{s.t.} \; \gA(\vw) \triangleq \argmax_{\gA \in \sA} \gM(\gA; \vw) \ .
\end{gathered}
\end{equation}
Subsequently, we propose $\widetilde{\gA}_{\sM}^* = \gA(\vw^*)$, which represents the best architecture we can obtain based on the information of training-free metrics $\sM$ using the weighted linear combination.

The rationale for opting for a linear combination stems from two key aspects. First, the linear combination has a superior representation ability compared with training-free metrics, where the lower bound can be observed from the one-hot setting of the weight vector (i.e., the weight of the best training-free metric is 1 and the rest are 0).  Furthermore, Proposition~\ref{prop:pearson-improve} (prove in Appendix~\ref{sec:proof-prop-1}) suggests that a linear combination of two estimation metrics can almost always improve the correlation with the objective evaluation metric, and the exception exists in rare cases such as training-free metrics are co-linear.

\begin{proposition}\label{prop:pearson-improve}
   Suppose there are two estimation metrics $\gM_1$, $\gM_2$, and objective evaluation metric $f$. If $\Cov[\gM_1,\gM_2] \neq \frac{\Cov[\gM_2,f]\Var[\gM_1]}{\Cov[\gM_1,f]}$ and $\Cov[\gM_1,\gM_2] \neq \frac{\Cov[\gM_1,f]\Var[\gM_2]}{\Cov[\gM_2,f]}$, $\exists w_1, w_2 \in \sR$ such that 
    \begin{equation*}
        \rho_{\pearson}(w_1\gM_1 + w_2\gM_2, f) > \max(\rho_{\pearson}(\gM_1, f), \rho_{\pearson}(\gM_2, f))
\label{pe}
    \end{equation*}
    where $\Cov[X, Y], \rho_{\pearson}(X, Y)$ are the covariance and Pearson's correlation between $X$ and $Y$, respectively, and $\Var[X]$ is the variance of $X$.
\end{proposition}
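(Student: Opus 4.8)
Before anything else I would record the regularity that the statement tacitly needs: $\Var[\gM_1],\Var[\gM_2],\Var[f]>0$ and $\Cov[\gM_1,f],\Cov[\gM_2,f]\ne 0$, without which the displayed ratios and correlations are not even defined. I would then note that these conditions already exclude the co-linear degenerate case: if $\gM_2=\lambda\gM_1+\mu$ almost surely, then $\Cov[\gM_1,\gM_2]=\lambda\Var[\gM_1]$ and $\Cov[\gM_2,f]=\lambda\Cov[\gM_1,f]$, hence $\Cov[\gM_1,\gM_2]=\Cov[\gM_2,f]\Var[\gM_1]/\Cov[\gM_1,f]$, contradicting the first hypothesis. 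Thus $\Var[\gM_1]\Var[\gM_2]-\Cov[\gM_1,\gM_2]^2>0$, i.e. the covariance matrix $\Sigma$ of $(\gM_1,\gM_2)$ is positive definite.

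The plan is to view $g(w_1,w_2)\triangleq\rho_{\pearson}(w_1\gM_1+w_2\gM_2,f)$ as a function on $\sR^2\setminus\{\vzero\}$, which is invariant under positive rescaling of $(w_1,w_2)$ and satisfies $g(-w_1,-w_2)=-g(w_1,w_2)$, with $g(1,0)=\rho_{\pearson}(\gM_1,f)$ and $g(0,1)=\rho_{\pearson}(\gM_2,f)$. It then suffices to exhibit a single weight vector at which $g$ strictly exceeds $\max(\rho_{\pearson}(\gM_1,f),\rho_{\pearson}(\gM_2,f))$. Since both hypotheses are assumed and they are interchanged by swapping the indices $1\leftrightarrow 2$, I would take WLOG $\rho_{\pearson}(\gM_1,f)=\max(\rho_{\pearson}(\gM_1,f),\rho_{\pearson}(\gM_2,f))$. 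If $\Cov[\gM_1,f]<0$, then $\rho_{\pearson}(\gM_1,f)<0$ and the vector $(w_1,w_2)=(-1,0)$ already works, since $g(-1,0)=-\rho_{\pearson}(\gM_1,f)>0$. Otherwise $\Cov[\gM_1,f]>0$, and I would perturb along the line $w_1=1,\ w_2=t$: with $\phi(t)\triangleq g(1,t)$, a one-line differentiation yields
\begin{equation*}
\phi'(0)=\frac{\Cov[\gM_2,f]\,\Var[\gM_1]-\Cov[\gM_1,f]\,\Cov[\gM_1,\gM_2]}{\sqrt{\Var[f]}\;\Var[\gM_1]^{3/2}},
\end{equation*}
which is nonzero precisely when $\Cov[\gM_1,\gM_2]\ne\Cov[\gM_2,f]\Var[\gM_1]/\Cov[\gM_1,f]$, i.e. under the first hypothesis. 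Picking $t$ of the right sign and small enough magnitude (so the quadratic under the square root stays positive) then gives $g(1,t)>\phi(0)=\rho_{\pearson}(\gM_1,f)$, the desired strict improvement.

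A cleaner global variant, which I would at least sketch as well, is to optimize directly: by the generalized Cauchy--Schwarz inequality, $\max_{\vw\ne\vzero}g(\vw)^2=\beta^\top\Sigma^{-1}\beta/\Var[f]$ with $\beta=(\Cov[\gM_1,f],\Cov[\gM_2,f])^\top$, attained at $\vw\propto\Sigma^{-1}\beta$; expanding the $2\times 2$ inverse, the excess over the single-metric value is a perfect square,
\begin{equation*}
\beta^\top\Sigma^{-1}\beta-\frac{\Cov[\gM_1,f]^2}{\Var[\gM_1]}=\frac{\big(\Cov[\gM_2,f]\Var[\gM_1]-\Cov[\gM_1,f]\Cov[\gM_1,\gM_2]\big)^2}{\Var[\gM_1]\,\det\Sigma}\ >\ 0
\end{equation*}
under the first hypothesis (and symmetrically for $\gM_2$ under the second), so the maximal squared correlation strictly exceeds $\max(\rho_{\pearson}(\gM_1,f)^2,\rho_{\pearson}(\gM_2,f)^2)$; a short sign argument (replace the maximizer $\vw$ by $-\vw$ if needed, using $g(-\vw)=-g(\vw)$ and $\max(|\rho_{\pearson}(\gM_1,f)|,|\rho_{\pearson}(\gM_2,f)|)\ge\max(\rho_{\pearson}(\gM_1,f),\rho_{\pearson}(\gM_2,f))$) then upgrades this to the stated inequality on the correlations themselves. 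The computations involved (the derivative, or the $2\times 2$ matrix expansion) are routine; I expect the only genuinely delicate point to be the degenerate-case bookkeeping — confirming that the hypotheses exclude co-linearity and vanishing $\Cov[\gM_i,f]$, and keeping track of the sign of $\Cov[\gM_i,f]$ so that the conclusion is about $\rho_{\pearson}$ rather than $|\rho_{\pearson}|$.
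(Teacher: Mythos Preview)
Your proof is correct and takes a genuinely different route from the paper's. The paper parametrizes the combination by a single scalar $a$ via scale invariance, sets the derivative of $\rho_{\pearson}(a\gM_1+\gM_2,f)$ with respect to $a$ to zero to obtain a closed-form critical point, and then argues that the unique critical point of a bounded function on $\sR$ must be the global extremum and hence strictly above both axis values. Your first approach is more local and more elementary: you evaluate the same derivative only at the base point $t=0$, observe that it is nonzero precisely under the relevant hypothesis, and improve by a small perturbation --- this sidesteps the slightly informal ``unique critical point + bounded $\Rightarrow$ global max'' step in the paper. Your second approach via $\beta^\top\Sigma^{-1}\beta$ is the OLS / generalized Cauchy--Schwarz route; it gives the exact maximal squared correlation and exhibits the gain over each single metric as an explicit perfect square, making the role of each hypothesis fully transparent. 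The paper's route has the advantage of producing the optimal weight ratio in closed form; your routes have the advantage of making the sign analysis and the degeneracy bookkeeping (nonvanishing $\Cov[\gM_i,f]$, exclusion of co-linearity, positive definiteness of $\Sigma$) explicit, which the paper leaves implicit.
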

This claim can be extended to the scenario of combining multiple metrics, which can be initially treated as a linear combination of two metrics, and then combined with a third. In this context, the strictly increasing properties almost always hold. Hence, Proposition~\ref{prop:pearson-improve} implies that the architecture proposed by a linear combination could outperform any architecture proposed by a single metric. 

The second rationale is that linear combinations have sufficient expressive \emph{hypothesis space}, which means improved performance can be obtained from the hypothesis space but it may not be too complex to be optimized. To clarify, the hypothesis space here refers to the possible permutations of architectures given different weight vectors. Given that we may lack a clear understanding of the intricate relationships among training-free metrics, a linear combination serves as a robust and simple choice that contains sufficient good permutation while being easy to optimize. 

Although the concept of a linear combination is straightforward, determining the most suitable weight vector can be challenging. These challenges stem from three main factors. First, there might be insufficient prior knowledge regarding the performance of a metric on a specific task. It's common to see inconsistent metric performance across different tasks, as shown in Table~\ref{tab:rank-of-proposed-architecture}. Second, even if there is some prior knowledge, the weight vector in the linear combination plays a rather sophisticated role and does more than merely indicate the importance of metrics. For example, Figure~\ref{fig: counterexample} shows a scenario where Metric 1 appears to perform better than Metric 2 in a given task (e.g., exhibiting a higher Spearman's rank correlation). However, to choose Architecture 1, which is the optimum for this task, the weight vector must fall within the dark green region, suggesting that Metric 2 generally requires higher weighting than Metric 1, which is the opposite of their performance when used alone. Third, the optimization process is costly, as it involves querying the objective evaluation metric for the true architecture performance. 

\begin{figure}[t]
 \centering
 \begin{tabular}{cc}
     \includegraphics[valign=c, width=0.3\columnwidth]{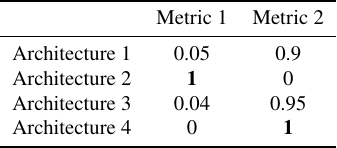}
     &
     \includegraphics[valign=c, width=0.3\columnwidth]{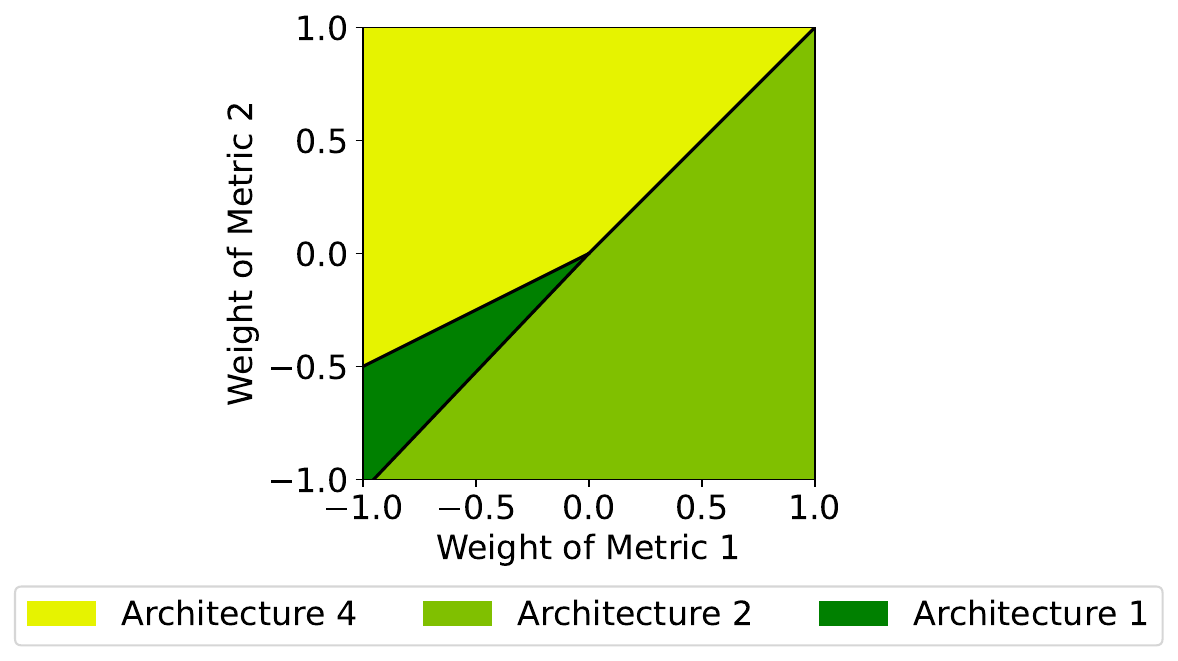} \\
     {(a)} & {(b)}
 \end{tabular}
\vskip -0.1in
\caption{(a) Training-free Metric 1 and 2's scores for four architectures for a specific task, a higher value indicates more recommended. The true architecture performance is ranked as $1 > 2 > 3 > 4$. (b) The highest-scoring architecture selected based on the weight vector of Metric 1 and Metric 2. Each region of the weight vector selects the same architecture, as represented by its color.}
\vskip -0.1in
\label{fig: counterexample}
\end{figure}

\subsection{Exploration to Optimize the Robust Estimation Metric}\label{sec:explore}
Considering the aforementioned difficulties, we employ \textit{Bayesian optimization} (BO) \citep{bo} to optimize the weight vector. We select BO due to its effectiveness in handling black-box global optimization problems and its efficiency in incurring only small search costs, aligning well with our scenario where the optimization problem is complex and querying is costly. We present our algorithm in Algorithm~\ref{alg:bo-explore}. 

Specifically, in every iteration, we first use a set of observations to update the Gaussian Process (GP) surrogate, which is used to model the prior and posterior distribution based on the observations (line 4 of Algorithm~\ref{alg:bo-explore}). An observation is defined as a pair of the weight vector and the true architecture performance of the corresponding architecture selected based on the weight vector, as defined in Equation~\ref{eq:def-weighted-combination}. Then we choose the next weight vector based on the acquisition function and obtain the corresponding architecture (lines 5-6 of Algorithm~\ref{alg:bo-explore}). If the true architecture performance of this architecture has not been queried yet, we will query it. Otherwise, we obtain it from the previous observations (lines 7-11 of Algorithm~\ref{alg:bo-explore}). We update the observation set and when iterations are complete, we select the best-queried weight vector based on the performance of the corresponding architecture (see Appendix~\ref{sec:implemtation-details} for more implementation details such as the utilized training-free metrics). With this weight vector, we answer \textbf{RQ1} by proposing the robust estimation metric  $\gM(\cdot;\widetilde{\vw}^*)$.

\begin{algorithm}[t]
\caption{Optimization of Weight Vector through Bayesian Optimization}
\label{alg:bo-explore}
\begin{algorithmic}[1]
\STATE {\bfseries Input:} Objective evaluation metrics $f$, a set of training-free metrics $\sM$, the search space $\sA$, and the search budget $T$

\STATE $\gQ_0 = \emptyset$
\FOR{step $t=1, \ldots, T$}
\STATE Update the GP surrogate using $\gQ_{t-1}$
\STATE Choose $\vw_t$ using the acquisition function
\STATE Obtain the architecture $\gA(\vw_t)$
\IF{$\gA(\vw_t)$ is not queried}
\STATE Query for the objective evaluation metric $f(\gA(\vw_t))$
\ELSE
\STATE Obtain $f(\gA(\vw_t))$ from $\gQ_{t-1}$
\ENDIF
\STATE Obtain $\gQ_t = \gQ_{t-1} \cup \{(\vw_t, f(\gA(\vw_t)))\}$ 
\ENDFOR
\STATE Select the best-queried weight vector $\widetilde{\vw}^* = \argmax_{\vw}\{f(\gA(\vw)), (\vw, f(\gA(\vw))) \in \gQ_T\}$.
\end{algorithmic}
\end{algorithm}
\subsection{Exploitation to Bridge the Estimation Gap} \label{sec:exploit}

As discussed in \textbf{RQ2}, the estimation gap limits the metrics to propose a superior architecture. This estimation gap also exists in our proposed robust estimation metric. 

To bridge this estimation gap, we first need to quantify it. We propose to use Precision @ $T$ value, which measures the ratio of \emph{relevant} architectures within the top $T$ architectures based on the estimation metric. Here, an architecture is deemed relevant if it ranks among the top $T$ true-architecture-performance architectures. Formally, with an estimation metric $\gM$, Precision @$T$ is defined as
\begin{equation}\label{eq:precision}
\rho_{\precision}(\gM, f) \triangleq \frac{|\{\gA, R_{\gM}(\gA) \leq T \wedge R_f(\gA) \leq T\}|}{T}.
\end{equation}
The reason for adopting Precision @ $T$ value rather than the more conventionally used measurement such as Spearman's rank correlation stems from two factors. First, it focuses on the estimation of top-performing architectures rather than the entire search space, which aligns with the goal of finding optimal architecture in NAS. Second, if we employ \emph{greedy search} on the top $T$ architectures based on the estimation metric $\gM$, and propose the architecture
\begin{equation}\label{eq:greedy-search}
\textstyle \gA_{\gM, T}^* = \argmax_{\gA \in \sA}\{f(\gA), R_{\gM}(\gA) \leq T \},
\end{equation}
then we can directly obtain the expected ranking performance of this architecture, as demonstrated in Theorem~\ref{theorem:precision-ranking} (the proof is given in Appendix~\ref{sec:proof-theorem-1} with the discussion about the uniform distribution assumption of $\sP[R_f(A)=t]=1/T$). According to this theorem, a higher Precision @ $T$ value leads to a better ranking expectation. Hence, using Precision @ $T$ can provide a direct analysis of the performance of the proposed architecture.

\begin{theorem}\label{theorem:precision-ranking}
Given the estimation metric $\gM$ and the objective evaluation metric $f$, define $\sA_{T, \gM, f} = \{\gA, R_{\gM}(\gA) \leq T \wedge R_f(\gA) \leq T\}$. Suppose that $\forall \gA \in \sA_{T, \gM, f}, \forall t \in \{1, 2, \cdots, T \}, \sP[R_f(\gA) = t] = {1}/{T}$. If $\rho_{\precision}(\gM, f) \neq 0$, then 
    \begin{equation*}
        \E[R_f(\gA_{\gM, T}^*)] = \frac{T+1}{\rho_{\precision}(\gM, f)T+1}\ .
\label{precision}
    \end{equation*}
\end{theorem}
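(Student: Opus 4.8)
The plan is to first reduce $R_f(\gA_{\gM,T}^*)$ to a minimum of $f$-ranks over the ``doubly top-$T$'' set $\sA_{T,\gM,f}$, and then compute that minimum's expectation via a standard tail-sum plus the hockey-stick identity. By definition in Equation~\ref{eq:greedy-search}, $\gA_{\gM,T}^*$ maximizes $f$ — equivalently minimizes $R_f$ — over the $T$ architectures with $R_{\gM}(\gA)\le T$. Since $\rho_{\precision}(\gM,f)\neq 0$, the set $\sA_{T,\gM,f}$ is nonempty, so at least one of these $T$ architectures has $R_f\le T$; and any architecture with $R_f\le T$ has strictly higher $f$ (strictly smaller rank) than any architecture with $R_f>T$. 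Hence the minimizer lies in $\sA_{T,\gM,f}$, giving $R_f(\gA_{\gM,T}^*)=\min\{R_f(\gA):\gA\in\sA_{T,\gM,f}\}$. Write $k=\rho_{\precision}(\gM,f)\,T=|\sA_{T,\gM,f}|$, so $1\le k\le T$.

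Next I would formalize the uniformity hypothesis. The $k$ architectures in $\sA_{T,\gM,f}$ carry $k$ pairwise-distinct $f$-ranks, all in $\{1,\dots,T\}$. The assumption $\sP[R_f(\gA)=t]=1/T$ for every such $\gA$ and every $t$, combined with distinctness and exchangeability of the architectures in $\sA_{T,\gM,f}$, amounts exactly to saying that the set of these $k$ ranks is a uniformly random $k$-element subset $S$ of $\{1,\dots,T\}$. Thus the task reduces to computing $\E[\min S]$ for $S$ uniform over the $k$-subsets of $\{1,\dots,T\}$.

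This last step is a routine computation. Using $\E[\min S]=\sum_{j\ge 1}\sP[\min S\ge j]$ together with $\sP[\min S\ge j]=\binom{T-j+1}{k}\big/\binom{T}{k}$ for $1\le j\le T-k+1$ (and $0$ otherwise), the reindexing $i=T-j+1$ gives $\E[\min S]=\frac{1}{\binom{T}{k}}\sum_{i=k}^{T}\binom{i}{k}$, and the hockey-stick identity $\sum_{i=k}^{T}\binom{i}{k}=\binom{T+1}{k+1}$ collapses this to $\binom{T+1}{k+1}\big/\binom{T}{k}=\frac{T+1}{k+1}$. Substituting $k=\rho_{\precision}(\gM,f)\,T$ yields $\E[R_f(\gA_{\gM,T}^*)]=\frac{T+1}{\rho_{\precision}(\gM,f)\,T+1}$, as claimed.

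The main obstacle is not the combinatorics but the modeling step in the second paragraph: the hypothesis as literally stated only constrains the \emph{marginal} law of each $R_f(\gA)$, whereas $\E[\min S]$ depends on the \emph{joint} law of the $k$ relevant ranks (the tail probabilities $\sP[\min S\ge j]$ are joint events). The argument must therefore make explicit — as the paper does in the appendix discussion of this assumption — that the natural reading (distinct ranks, exchangeable, uniform marginals) pins down the uniform-$k$-subset joint distribution, which is precisely what makes $\sP[\min S\ge j]$ the clean binomial ratio above; absent that, one would only obtain bounds rather than the stated equality.
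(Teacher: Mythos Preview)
Your proposal is correct and follows essentially the same approach as the paper: both reduce $R_f(\gA_{\gM,T}^*)$ to the minimum $f$-rank over $\sA_{T,\gM,f}$, interpret the uniformity hypothesis as ``the $k$ relevant ranks form a uniformly random $k$-subset of $\{1,\dots,T\}$,'' and then compute $\E[\min S]=(T+1)/(k+1)$ via the hockey-stick identity. The only minor difference is that the paper computes the expectation through the PMF $\sP[\min S=j]=\binom{T-j}{k-1}/\binom{T}{k}$ and applies hockey-stick twice (once after a rearrangement of $\sum j\,\sP[\min S=j]$ into a double sum), whereas you use the tail-sum $\sum_j \sP[\min S\ge j]$ and need hockey-stick only once; your remark on the marginal-versus-joint issue also matches the paper's appendix remark on the uniform assumption.
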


Therefore, we employ the greedy search (i.e., exploitation) on the robust estimation metric to bridge the estimation gap. Notably, within Algorithm~\ref{alg:bo-explore}, we may not utilize the entire $T$ search budget (as indicated in lines 9-10). Suppose that there are $T_0$ distinct architectures in $\gQ_T$ (corresponding to the number of executions of line 8), this leaves a remaining search budget of $T - T_0$ for exploitation. In practice, the magnitude of $T - T_0$ is usually significant compared with $T$, as shown by our experiments in Section \ref{sec:experiments}. Thus, we apply the greedy search among the top $T - T_0$ architectures of the robust estimation metric to propose the architecture $\gA_{\gM(\cdot;\widetilde{\vw}^*), T-T_0}^*$.

By integrating this strategy with Algorithm~\ref{alg:bo-explore}, we formulate our proposed method, referred to as \textit{\underline{ro}bustifying and \underline{bo}osting \underline{t}raining-free neural architecture search} (\alg{}), as detailed in Algorithm~\ref{alg:RoBoT-NAS}. This innovative method explores the weight vector hypothesis space to create a robust estimation metric and exploits the robust estimation metric to further boost the performance of the proposed architecture, thereby proposing a robust and competitive architecture that potentially surpasses those proposed by any single training-free metric, a claim we will substantiate later.

\begin{algorithm}[t!]
\caption{\underline{Ro}bustifying and \underline{Bo}osting \underline{T}raining-Free Neural Architecture Search (\alg{})}
\label{alg:RoBoT-NAS}
\begin{algorithmic}[1]
\STATE {\bfseries Input:} Objective evaluation metrics $f$, a set of training-free metrics $\sM$, the search space $\sA$, and the search budget $T$

\STATE Execute Algorithm~\ref{alg:bo-explore}, and obtain $\widetilde{\vw}^*$, $T_0 = |\{\gA(\vw), (\vw, f(\gA(\vw))) \in \gQ_T\}|$
\STATE Obtain robust estimation metric $\gM(\cdot;\widetilde{\vw}^*)$ and corresponding ranking $R_{\gM(\cdot;\widetilde{\vw}^*)}$
\STATE Propose architecture $\widetilde{\gA}_{\sM, T}^* = \argmax_{\gA \in \sA}\{f(\gA), \gA_{\gM(\cdot;\widetilde{\vw}^*), T-T_0}^* \vee \gA \in \gQ_T \}$
\end{algorithmic}
\end{algorithm}

\section{Discussion and Theoretical Analyses}\label{sec:discussion}
Our proposed algorithm \alg{} successfully tackles the research questions \textbf{RQ1} and \textbf{RQ2}. Yet, the assessment of our proposed architecture $\widetilde{\gA}_{\sM, T}^*$ is still open-ended as the value of $\rho_{\precision}(\gM(\cdot, \widetilde{\vw}^*), f)$ remains uncertain. To address this, we incorporate our algorithm into the \emph{partial monitoring} framework and provide an analytical evaluation. Based on these findings, we draw intriguing insights into the factors influencing the performance of \alg{}.

\subsection{Expected Performance of \alg{}} \label{sec:expected-performance}
To evaluate Precision @ $T$ value, we note that our Algorithm~\ref{alg:bo-explore} fits within the \textit{partial monitoring} framework \citep{partial-monitoring}. Partial monitoring is a sequential decision-making game where the learner selects an action, incurs a \emph{reward}, and receives an \emph{observation}. The observation is related to the reward, but there is a gap between them. In our context, the action is the selected weight vector, the observation is the true architecture performance of selected architecture and the reward is the Precision @ $T$ between the objective evaluation metric and the robust estimation metric. The goal of a partial monitoring game is to minimize the \emph{cumulative regret}, where in our case it is defined as 

\begin{equation}\label{eq:regret}
\textstyle Reg_{t} \triangleq \sum_{\tau=1}^{t} {\max_{\vw} \rho_{\precision}(\gM(\cdot; \vw), f) -\rho_{\precision}(\gM(\cdot; \vw_\tau), f)}.
\end{equation}

If the cumulative regret is bounded, the expectation of $\rho_{\precision}(\gM(\cdot, \widetilde{\vw}^*), f)$ will also be bounded.  To quantify the usefulness of the observation, a condition known as \textit{global observability} is used. A global observable game is one in which the learner can access actions that allow estimation of reward differences between different actions based on the observation. If this condition is met, a sublinear cumulative regret  $Reg_{t} \leq \widetilde{\gO}(t^{2/3})$ can be achieved \citep{partial-monitoring}. In our scenario, we determine that if the conditions of \textbf{Expressiveness of the Hypothesis Space} and \textbf{Predictable Ranking through Performance} are satisfied, our setup can be considered globally observable (see Appendix~\ref{sec:proof-theorem-2} for elaboration). As stated in \citet{linear-partial-monioring}, the partial monitoring can be extended to reproducing kernel Hilbert Spaces (RKHS) which contains kernelized bandits (i.e., BO). If the above-mentioned conditions can be satisfied, and we use an \textit{information directed sampling} (IDS) strategy as our acquisition function in BO, our Algorithm~\ref{alg:bo-explore} can achieve a bounded regret $Reg_{t} \leq \widetilde{\gO}(t^{2/3})$. Combine the above brings us to our analysis of $\E[R_f(\widetilde{\gA}_{\sM, T}^*)]$.

\begin{theorem}\label{theorem:expectation-of-rank}
Suppose the maximum Precision@$T$ can be achieved by weighted linear combination is $\rho_{\precision}^*(\gM_{\sM}, f) \triangleq \max_{\vw} \rho_{\precision}(\gM(\cdot; \vw), f)$, then for $\widetilde{\gA}_{\sM, T}^*$ obtained from Algorithm~\ref{alg:RoBoT-NAS},
    \begin{equation}
        \E[R_f(\widetilde{\gA}_{\sM, T}^*)] \leq \frac{T+1}{(\rho_{\precision}^*(\gM_{\sM}, f) - q_TT^{-1/3})(T-T_0)+1},
\label{expected_ranking}
    \end{equation}
    with probability arbitrarily close to 1 when $\rho_{\precision}^*(\gM_{\sM}, f) - q_TT^{-1/3} > 0$. The value of $q_T$ is fixed when $T$ is fixed and $q_T$ depends only logarithmically on $T$.
\end{theorem}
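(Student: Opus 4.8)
The plan is to combine Theorem~\ref{theorem:precision-ranking}, which converts a Precision@$T$ value into an expected-ranking bound, with the regret guarantee of the partial-monitoring/IDS machinery that bounds how far the BO-selected weight vector falls short of the optimal Precision@$T$. First I would set up the exploitation step carefully: after running Algorithm~\ref{alg:bo-explore} we have $T_0$ distinct queried architectures, leaving budget $T-T_0$ for greedy search on the robust metric $\gM(\cdot;\widetilde{\vw}^*)$. Applying Theorem~\ref{theorem:precision-ranking} with search budget $T-T_0$ in place of $T$ (keeping the ``top-$T$ relevant'' notion of $R_f$ fixed, so the hypothesis is that the relevant set is ranked uniformly) gives $\E[R_f(\gA^*_{\gM(\cdot;\widetilde{\vw}^*),\,T-T_0})] \le \frac{T+1}{\rho_{\precision}(\gM(\cdot;\widetilde{\vw}^*),f)(T-T_0)+1}$, where I would need to be a little careful that the numerator is $T+1$ (the size of the relevant pool) rather than $(T-T_0)+1$; the statement of Theorem~\ref{theorem:precision-ranking} as written would have to be read with the top-$T$ true-performance set held fixed while only the metric-side cutoff shrinks to $T-T_0$. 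Since $\widetilde{\gA}^*_{\sM,T}$ is chosen in line 4 of Algorithm~\ref{alg:RoBoT-NAS} as the better of this greedy-search output and everything already in $\gQ_T$, its expected ranking is no worse, so the same upper bound holds for $\widetilde{\gA}^*_{\sM,T}$.

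Next I would invoke the regret bound. Under the two stated conditions (\textbf{Expressiveness of the Hypothesis Space} and \textbf{Predictable Ranking through Performance}), the game is globally observable, and the kernelized IDS analysis of \citet{linear-partial-monioring} gives $Reg_T \le \widetilde{\gO}(T^{2/3})$ with probability arbitrarily close to $1$. Writing this as $Reg_T \le q_T T^{2/3}$ with $q_T$ depending only logarithmically on $T$ (and fixed once $T$ is), and using that $Reg_T$ is a sum of $T$ nonnegative gap terms, a simple averaging/pigeonhole argument shows there exists a queried round $\tau$ with instantaneous regret at most $q_T T^{2/3}/T = q_T T^{-1/3}$, i.e. $\rho_{\precision}(\gM(\cdot;\vw_\tau),f) \ge \rho^*_{\precision}(\gM_{\sM},f) - q_T T^{-1/3}$. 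Because line~13 of Algorithm~\ref{alg:bo-explore} selects $\widetilde{\vw}^*$ as the best-\emph{performing} queried weight vector, I would need to argue that selecting on downstream architecture performance does not hurt the Precision@$T$ — this is exactly where the \textbf{Predictable Ranking through Performance} condition is used, so that the weight vector maximizing queried performance also (essentially) attains the best observed Precision@$T$. Hence $\rho_{\precision}(\gM(\cdot;\widetilde{\vw}^*),f) \ge \rho^*_{\precision}(\gM_{\sM},f) - q_T T^{-1/3}$ with probability arbitrarily close to $1$.

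Substituting this lower bound on $\rho_{\precision}(\gM(\cdot;\widetilde{\vw}^*),f)$ into the expected-ranking bound from the first paragraph, and noting the bound is decreasing in $\rho_{\precision}$ (so a lower bound on $\rho_{\precision}$ yields an upper bound on the expected ranking), gives
\begin{equation*}
\E[R_f(\widetilde{\gA}_{\sM, T}^*)] \leq \frac{T+1}{(\rho_{\precision}^*(\gM_{\sM}, f) - q_TT^{-1/3})(T-T_0)+1},
\end{equation*}
valid whenever $\rho_{\precision}^*(\gM_{\sM}, f) - q_TT^{-1/3} > 0$ so that the denominator is positive and Theorem~\ref{theorem:precision-ranking}'s nonzero-precision hypothesis is met. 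The ``probability arbitrarily close to $1$'' qualifier is inherited verbatim from the high-probability regret bound of the IDS analysis.

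The main obstacle I expect is the honest verification of \emph{global observability} for this particular game — namely, showing that the two conditions really do let the learner estimate differences in Precision@$T$ between weight vectors from the observed true-architecture-performance signal — and, relatedly, bridging the mismatch between what Algorithm~\ref{alg:bo-explore} actually optimizes (downstream architecture performance $f(\gA(\vw))$) and the reward the regret is stated against (Precision@$T$). Both of these hinge on \textbf{Predictable Ranking through Performance}, and getting the logical dependence clean — rather than circular — is the delicate part; the rest is assembling the regret bound, the pigeonhole step, and the monotone substitution into Theorem~\ref{theorem:precision-ranking}, all of which are routine.
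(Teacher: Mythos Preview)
Your proposal is essentially the same as the paper's: both combine the kernelized partial-monitoring/IDS regret bound $Reg_T \le q_T T^{2/3}$ with Theorem~\ref{theorem:precision-ranking} applied to the $T-T_0$ exploitation phase, and both pass through a lower bound on $\rho_{\precision}(\gM(\cdot;\widetilde{\vw}^*),f)$ of the form $\rho_{\precision}^*(\gM_{\sM},f)-q_T T^{-1/3}$.

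The one place the paper goes beyond your sketch is exactly the obstacle you flag: it supplies an explicit construction of the partial-monitoring game. Concretely, it sets $k_{\vw},A_{\vw},\theta\in\sR^N$ with $(k_{\vw})_i=\1(R_{\gM(\cdot;\vw)}(\gA_i)\le T)$, $(A_{\vw})_i=\1(R_{\gM(\cdot;\vw)}(\gA_i)=1)\,T$, and $(\theta)_i=\1(R_f(\gA_i)\le T)/T$, so that $\langle k_{\vw},\theta\rangle=\rho_{\precision}(\gM(\cdot;\vw),f)$ and the observation $A_{\vw}^\top\theta$ reduces to $\1(R_f(\gA(\vw))\le T)$. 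With this in hand, \textbf{Expressiveness of the Hypothesis Space} translates literally into the span condition for global observability, and \textbf{Predictable Ranking through Performance} is what lets the observed $f(\gA(\vw))$ stand in (up to sub-Gaussian noise) for $\1(R_f(\gA(\vw))\le T)$. Your proposal would be complete once you write down this construction; the rest of your argument (pigeonhole on the regret, monotone substitution into Theorem~\ref{theorem:precision-ranking}, taking the better of the greedy output and $\gQ_T$) matches the paper's, and in places is actually more explicit than the paper's one-line ``directly derived from Theorem~\ref{theorem:precision-ranking} and Lemma~\ref{lemma:classical}.'' Your worry about the numerator being $T+1$ rather than $(T-T_0)+1$ is legitimate and is not addressed any more carefully in the paper than in your sketch.
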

Its proof is given in Appendix~\ref{sec:proof-theorem-2}. With this, we have presented a bounded expectation estimation of the ranking performance of our proposed architecture. In the following section, we will delve into the insights and practical implications derived from Theorem~\ref{theorem:expectation-of-rank}.

\subsection{Discussion and Analysis on \alg{}}\label{sec:futher-discussion}
With the information provided in Theorem~\ref{theorem:expectation-of-rank}, we would like to discuss some factors that influence the performance of the proposed architecture $\widetilde{\gA}_{\sM, T}^*$. 
\paragraph{Influence of $T_0$}
As described in Section~\ref{sec:exploit}, the value of $T_0$ is determined by the number of distinct architectures queried during Algorithm~\ref{alg:bo-explore}. However, under this setting, it implies the value of $T_0$ is not predictable before we perform BO, and is fixed after we perform BO. To analyze the influence of $T_0$, we consider an alternative setting in which we strictly require the BO to run for $T_0$ rounds (and waste the search budget if there are duplicate queries for the same architecture), then conduct exploitation for $T-T_0$ rounds. Under this setting, the denominator in Equation~\ref{expected_ranking} will be updated as $(\rho_{\precision}^*(\gM_{\sM}, f) - q_{T_0}{T_0}^{-1/3})(T-T_0)+1$. 

There are two interesting insights we would like to bring attention. First, as $T_0 \leq T$, for a fixed value of $T_0$, the updated ranking expectation is always worse than the original. Second, as the left and the right terms of the denominator are both influenced by $T_0$, $T_0$ serves as an \textbf{explore-exploitation trade-off}. While carefully choosing a $T_0$ might bring a better ranking expectation, our analysis and later ablation study (see Appendix~\ref{sec:ablation_studies}) suggests that keeping $T_0$ to be \textbf{automatically} decided as defined in Algorithm~\ref{alg:RoBoT-NAS} would yield a better and more robust performance.

\paragraph{Influence of $T$} \label{para: influence-of-T}
Interestingly, we find that if there is more search budget provided, our proposed architecture is more likely to outperform the architecture proposed by any single training-free metric even if they are given the same time budget, i.e., it is more likely $\E[R_f(\widetilde{\gA}_{\sM, T}^*)] \leq \min_{\gM \in \sM}(\E[R_f(\gA_{\gM, T}^*)])$ when $T$ increases (see Appendix~\ref{sec:derivation} for derivation). This suggests that there will be more advantage to use \alg{} compared with original training-free metrics when more search budget is provided. This claim can be verified in our experiments in Section~\ref{sec:experiments}.

\section{Experiments}\label{sec:experiments}
\subsection{\alg{} on NAS benchmark}\label{sec:main-results}
To empirically verify the robust predictive performance of our proposed \alg{} algorithm across various tasks, we conduct comprehensive experiments on three NAS benchmarks comprising 20 tasks: NAS-Bench-201 \citep{nasbench201}, TransNAS-Bench-101 \citep{tnb101} and DARTS search space \citep{darts}. Detailed experimental information can be found in Appendix~\ref{sec:experimental-details}. Partial results are summarized in Table~\ref{tab:sota-nasbench201}, Table~\ref{tab:sota-transnasbench101-ranking}, and Table~\ref{tab:imagenet}, with additional illustrations provided in Figure~\ref{fig: nb201_tnb101_micro} to depict \alg{}'s performance for different numbers of searched architectures. Full results can be found in Appendix~\ref{sec:more-empirical}. We clarify how we report the search costs in Appendix~\ref{sec:implemtation-details}. 

The results suggest that our proposed architecture consistently demonstrates strong performance across various tasks, achieving at least a similar performance level as the best training-free metric for nearly all tasks, while the latter is an \textit{oracle} algorithm in practice as it requires to enumerate every training-free metric. Furthermore, \alg{} even outperforms this oracle in several tasks (noting that the architecture proposed by the best training-free metric can already attain near-optimal performance), which exemplifies \alg{}'s potential to boost the training-free metrics rather than just ensuring robustness. As illustrated in Figure~\ref{fig: nb201_tnb101_micro}, comparing \alg{}'s performance with that of the best training-free metrics reveals that although \alg{} may initially perform worse, it surpasses the best after querying more architectures. This observation aligns with our earlier claim in Section~\ref{para: influence-of-T} regarding the influence of $T$. Moreover, \alg{} generally uses fewer search costs to achieve a similar performance level, which demonstrates its efficiency. Besides, results on DARTS search space presented in Table~\ref{tab:imagenet} verifies that \alg{} works well on larger scale search space and practical application as it searches in a pool of 60,000 architectures (please refer to Appendix~\ref{sec:darts} for more details). Overall, our findings suggest that \alg{} is an appropriate choice for ensuring the robustness of training-free metrics while having the potential to boost performance.

  \begin{figure}[t!]
 \centering
 \begin{tabular}{cc}
     \hspace{-4mm}
     \includegraphics[width=0.49\columnwidth]{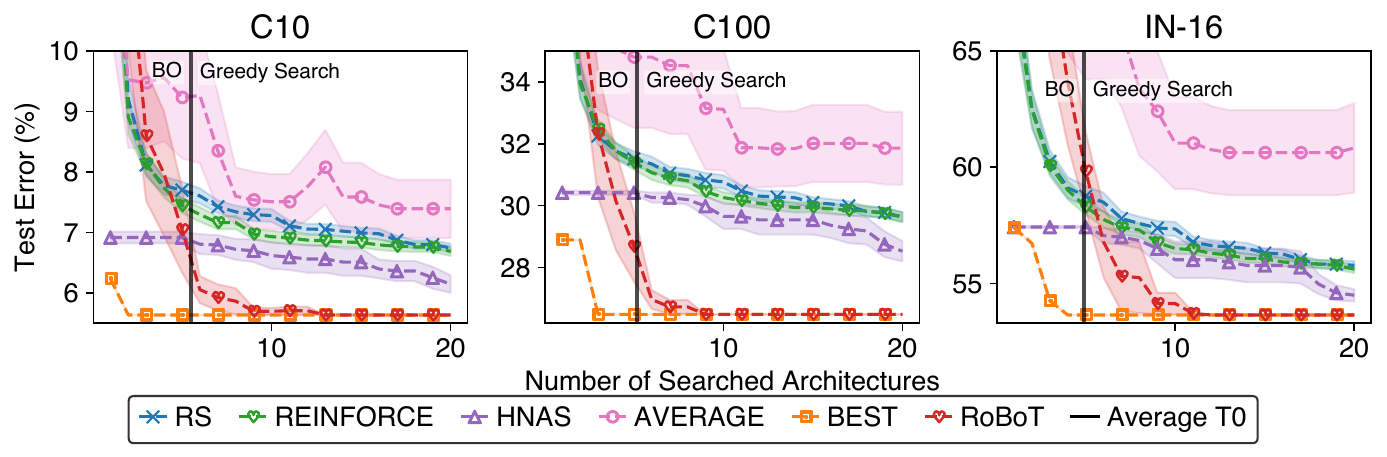}&
     \hspace{-4mm}
     \includegraphics[width=0.49\columnwidth]{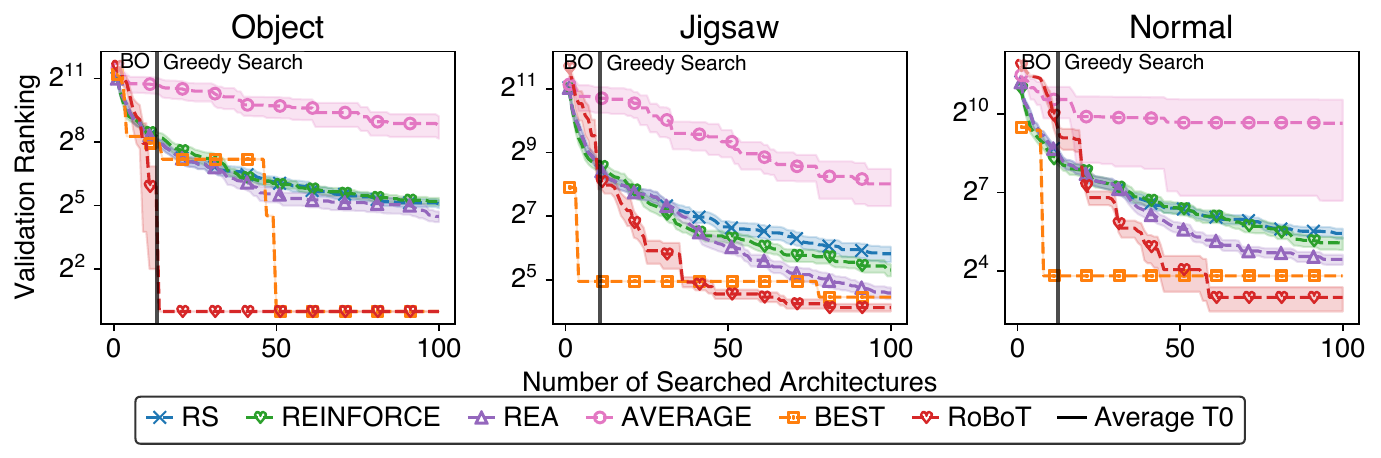} \\
     {(a) NAS-Bench-201} & (b) {TransNAS-Bench-101-micro}
 \end{tabular}
 \vskip -0.1in
\caption{{Comparison of NAS algorithms in NAS-Bench-201 and TransNAS-Bench-101-micro regarding the number of searched architectures. \alg{} and HNAS are reported with the mean and standard error of 10 runs, and 50 runs for RS, REA and REINFORCE.}}
\vskip -0.2in
\label{fig: nb201_tnb101_micro}
\end{figure}
\begin{table*}[t!]
\caption{Comparison of NAS algorithms in NAS-Bench-201. The result of \alg{} is reported with the mean $\pm$ standard deviation of 10 runs and search costs are evaluated on an Nvidia 1080Ti.}
\label{tab:sota-nasbench201}
\centering
\resizebox{\textwidth}{!}{
\begin{threeparttable}
\begin{tabular}{lccccccc}
\toprule
\multirow{2}{*}{\textbf{Algorithm}} & \multicolumn{3}{c}{\textbf{Test Accuracy (\%)}} &
\multirow{2}{*}{\textbf{Cost}} &
\multirow{2}{*}{\textbf{Method}} \\
\cmidrule(l){2-4}
& C10 & C100 & IN-16 & (GPU Sec.) & \\
\midrule 
ResNet \citep{resnet}  & 93.97 & 70.86 & 43.63 & - & manual \\
\midrule
REA$^{\dagger}$ & 93.92$\pm$0.30 & 71.84$\pm$0.99 & 45.15$\pm$0.89 & 12000 & evolution \\
RS (w/o sharing)$^{\dagger}$ & 93.70$\pm$0.36 & 71.04$\pm$1.07 & 44.57$\pm$1.25 & 12000 & random \\
REINFORCE$^{\dagger}$ & 93.85$\pm$0.37 & 71.71$\pm$1.09 & 45.24$\pm$1.18 & 12000 & RL \\
BOHB$^{\dagger}$ & 93.61$\pm$0.52 & 70.85$\pm$1.28 & 44.42$\pm$1.49 & 12000 & BO+bandit \\
\midrule
DARTS (2nd) \citep{darts} & 54.30$\pm$0.00 & 15.61$\pm$0.00 & 16.32$\pm$0.00 & 43277 & gradient \\
GDAS \citep{gdas}& 93.44$\pm$0.06 & 70.61$\pm$0.21 & 42.23$\pm$0.25 & 8640 & gradient \\
DrNAS \citep{drnas} & 93.98$\pm$0.58 & 72.31$\pm$1.70 & 44.02$\pm$3.24 & 14887 & gradient \\
{Shaply-NAS \citep{xiao2022shapley}} & 94.05$\pm$0.19 & 73.15$\pm$0.26 & 46.25$\pm$0.25 & 14762 & gradient \\
{$\beta$-DARTS \citep{ye2022beta}} & 94.00$\pm$0.22 & 72.91$\pm$0.43 & 46.20$\pm$0.38 & 3280 & gradient \\
\midrule
NASWOT \citep{mellor2021neural}& 92.96$\pm$0.81 & 69.98$\pm$1.22 &  44.44$\pm$2.10 & 306 & training-free \\
TE-NAS \citep{te-nas}& 93.90$\pm$0.47 & 71.24$\pm$0.56 & 42.38$\pm$0.46 & 1558 & training-free  \\
NASI \citep{nasi} & 93.55$\pm$0.10 & 71.20$\pm$0.14 & 44.84$\pm$1.41 & 120 & training-free \\
GradSign \citep{gradsign}& 93.31$\pm$0.47 & 70.33$\pm$1.28 & 42.42$\pm$2.81 & - & training-free  \\
\midrule
HNAS \citep{hnas} & 94.04$\pm$0.21 & 71.75$\pm$1.04 & 45.91$\pm$0.88 & 3010 & hybrid  \\
\midrule
Training-Free & & & & & \\
$\qquad \hookrightarrow$ Avg. & 91.71$\pm$2.37 & 66.48$\pm$4.94 & 35.46$\pm$9.90 & 2648 & enumeration \\
$\qquad \hookrightarrow$ Best & \textbf{94.36} & \textbf{73.51} & \textbf{46.34} & 3702 & enumeration \\
\midrule
\alg{} & \textbf{94.36}$\pm$0.00 & \textbf{73.51}$\pm$0.00 & \textbf{46.34}$\pm$0.00 & 3051 & hybrid \\
\midrule
\textbf{Optimal} & 94.37 & 73.51 & 47.31 & - & - \\
\bottomrule
\end{tabular}
\begin{tablenotes}\footnotesize
    \item[$\dagger$] Reported by \citet{nasbench201}.
\end{tablenotes}
\end{threeparttable}
} 
\vskip -0.2in
\end{table*}

\begin{table*}[t]
\caption{Comparison of NAS algorithms in TransNAS-Bench-101-micro (4,096 architectures) and macro (3,256 architectures) regarding validation ranking. All methods search for 100 architectures. The results of \alg{} are reported with the mean $\pm$ standard deviation of 10 runs.}
\label{tab:sota-transnasbench101-ranking}
\centering
\resizebox{\textwidth}{!}{
\begin{threeparttable}
\begin{tabular}{L{1cm}lcccccccc}
\toprule
\textbf{Space} &\textbf{Algorithm}
& Scene   & Object  & Jigsaw  & Layout & Segment. & Normal & Autoenco.  \\
\midrule
\multirow{8}{*}{\textbf{Micro}} &
REA & 26$\pm$22 & 23$\pm$28 & 24$\pm$22 &28$\pm$23 & 22$\pm$28  & 22$\pm$20 & \textbf{18}$\pm$16\\
& RS (w/o sharing) & 40$\pm$35 & 34$\pm$32 & 57$\pm$67 & 42$\pm$43 & 41$\pm$40 & 43$\pm$41 & 49$\pm$51\\
& REINFORCE & 35$\pm$28 & 37$\pm$31 & 40$\pm$37 &   38$\pm$32  & 34$\pm$35  & 33$\pm$40 & 34$\pm$32\\
& HNAS & 96$\pm$35 & 147$\pm$0 & 67$\pm$49 &480$\pm$520  &4$\pm$0 &22$\pm$0 &1391$\pm$0 \\
\cmidrule(l){2-9}
& Training-Free & & & & & & & \\
& $\qquad \hookrightarrow$ Avg. & 75$\pm$145 & 441$\pm$382 & 260$\pm$244 & 325$\pm$318 & 718$\pm$1417  & 802$\pm$1563 & 1698$\pm$125\\
& $\qquad \hookrightarrow$ Best & \textbf{2} & \textbf{1} & 22 & 18 & \textbf{3} & 14 & 36\\
\cmidrule(l){2-9}
& \alg{} & \textbf{2}$\pm$0 & \textbf{1}$\pm$0 & \textbf{17}$\pm$5 & \textbf{17}$\pm$34 &4$\pm$1  & \textbf{8}$\pm$8   & 66$\pm$70\\
\toprule
\multirow{8}{*}{\textbf{Macro}} 
& REA & 23$\pm$24 & 22$\pm$21 & 21$\pm$16 &21 $\pm$20 & 17$\pm$20  & 23$\pm$22 & 19$\pm$17\\
& RS (w/o sharing) & 35$\pm$32 & 26$\pm$24 & 33$\pm$39 & 29$\pm$29 & 33$\pm$38 & 26$\pm$23 & 31$\pm$26\\
& REINFORCE & 49$\pm$45 & 37$\pm$31 & 20$\pm$22 &   27$\pm$25  & 51$\pm$43  & 48$\pm$39 & 32$\pm$17\\
& HNAS & 552$\pm$0 & 566$\pm$0 & 150$\pm$52 &49$\pm$6  &16$\pm$0 &204$\pm$0 &674$\pm$0 \\
\cmidrule(l){2-9}
& Training-Free & & & & & & & \\
& $\qquad \hookrightarrow$ Avg. & 346$\pm$248 & 291$\pm$275 & 110$\pm$77 & 53$\pm$25 & 32$\pm$23  & 7$\pm$4 & 24$\pm$22\\
& $\qquad \hookrightarrow$ Best & \textbf{1} & 8 & 5 & \textbf{2} & \textbf{2} & \textbf{2} & \textbf{5}\\
\cmidrule(l){2-9}
& \alg{} & \textbf{1}$\pm$0 & \textbf{6}$\pm$3 & \textbf{3}$\pm$1 &6$\pm$6 &\textbf{2}$\pm$2  & \textbf{2}$\pm$0   & \textbf{5}$\pm$1\\
\bottomrule
\end{tabular}
\end{threeparttable}
} 
\vskip -0.1in
\end{table*}
\begin{table}[t]
\renewcommand\multirowsetup{\centering}
\caption{Performance comparison among SOTA image classifiers on ImageNet on DARTS search space. The search costs are evaluated on an Nvidia 1080Ti.}
\label{tab:imagenet}
\centering
\resizebox{0.94\textwidth}{!}{
\begin{tabular}{lccccc}
\toprule
\multirow{2}{*}{\textbf{Algorithm}} & \multicolumn{2}{c}{\textbf{Test Error (\%)}} &
\multirow{2}{1.0cm}{\textbf{Params} (M)} &
\multirow{2}{0.6cm}{\textbf{$+\times$} (M)} & 
\multirow{2}{2cm}{\textbf{Search Cost}  (GPU Days)} \\
\cmidrule(l){2-3}
& Top-1 & Top-5 & & \\
\midrule 
Inception-v1 \citep{inception}  & 30.1 & 10.1 & 6.6 & 1448 & - \\
MobileNet \citep{mobilenet} & 29.4 & 10.5 & 4.2 & 569 & - \\
\midrule
NASNet-A \citep{nasnet} & 26.0 & 8.4 & 5.3 & 564 & 2000\\
AmoebaNet-A \citep{amoebanet} & 25.5 & 8.0 & 5.1 & 555 & 3150 \\
PNAS \citep{pnas} & 25.8 & 8.1 & 5.1 & 588 & 225 \\
MnasNet-92 \citep{mnasnet} & 25.2 & 8.0 & 4.4 & 388 & - \\
\midrule
DARTS \citep{darts} & 26.7 & 8.7 & 4.7 & 574 & 4.0 \\
GDAS \citep{gdas} & 26.0 & 8.5 & 5.3 & 581 & 0.21\\
ProxylessNAS \citep{proxyless-nas} & 24.9 & 7.5 & 7.1 & 465 & 8.3 \\
SDARTS-ADV \citep{sdarts} & 25.2 & 7.8 & 5.4 & 594 & 1.3 \\
\midrule
TE-NAS \citep{te-nas} & 24.5 & 7.5 & 5.4 & - & 0.17 \\
NASI-ADA \citep{nasi} & 25.0 & 7.8 & 4.9 & 559 & 0.01 \\
\midrule
HNAS \cite{hnas} & 24.3 & 7.4 & 5.1 & 575 & 0.5 \\
\midrule
\alg{} & \textbf{24.1} & \textbf{7.3} & 5.0 & 556 & 0.6 \\
\bottomrule
\end{tabular}
}
\vskip -0.1in
\end{table}

\subsection{Ablation Studies}
To substantiate our theoretical discussion and bring additional insights regarding \alg{}, we conduct a series of ablation studies to examine factors influencing performance. These factors include optimized linear combination weights, Precision @ $T$, $T_0$, ensemble methods, choice of training-free metrics, and choice of observation. The results and discussion are in Appendix~\ref{sec:ablation_studies}. Overall, the ablation studies validate our prior discussions and theorem, providing a more profound comprehension of how these factors affect the performance of \alg{}.

\section{Conclusion}
This paper introduces a novel NAS algorithm, \alg{}, which \textit{(a)} ensures the robustness of existing training-free metrics via a weighted linear combination, where the weight vector is explored and optimized using BO, and \textit{(b)} harnesses the potential of the robust estimation metric to bridge the estimation gap, thus further boosting the expected performance of the proposed neural architecture. Notably, our analyses can be extended to broader applications. For instance, estimation metrics can be expanded to early stopping performance \citep{falkner2018bohb, zhou2020econas}, and neural architectures can be generalized to a broader set of ranking candidates. We anticipate that our analysis will shed light on the inherent ranking performance of NAS algorithms and inspire the community to further explore the ensemble of training-free metrics, unlocking their untapped potential.

\newpage
\section*{Reproducibility Statement}
We have included the necessary details to ensure the reproducibility of our theoretical and empirical results.
For our theoretical results, we state all our assumptions in Section~\ref{sec:expected-performance}, and provide detailed proof in Appendix~\ref{sec:proof}.
For our empirical results, the detailed experimental settings have been described in Appendix~\ref{sec:experimental-details} and Appendix~\ref{sec:darts}.
Our code has been submitted as supplementary material.

\section*{Acknowledgement}
This research is supported by the National Research Foundation (NRF), Prime Minister's Office, Singapore under its Campus for Research Excellence and Technological Enterprise (CREATE) programme. The Mens, Manus, and Machina (M3S) is an interdisciplinary research group (IRG) of the Singapore MIT Alliance for Research and Technology (SMART) centre.

\bibliographystyle{iclr2024_conference}
\bibliography{workspace/reference}

\appendix\begin{appendices}
\onecolumn
\section{Proofs}\label{sec:proof}
\subsection{Proof of Proposition~\ref{prop:pearson-improve}}\label{sec:proof-prop-1}
Given the definition of Pearson correlation $\rho_{\pearson}(X, Y) = \frac{\Cov[X, Y]}{\sqrt{\Var[X]\Var[Y]}}$, we aim to maximize the Pearson correlation between the linear combination of two estimation metrics $\gM_1, \gM_2$ and the objective evaluation metric $f$ over $w_1$ and $w_2$. For the weights $w_1, w_2 \in \sR$, the Pearson correlation is:
\begin{equation*}
\begin{aligned}
    \rho_{\pearson}(w_1\gM_1 + w_2\gM_2, f)  & = \frac{\Cov[w_1\gM_1 + w_2\gM_2, f]}{\sqrt{\Var[w_1\gM_1 + w_2\gM_2]\Var[f]}} \\
    & = \frac{\Cov[w_1\gM_1,f]+\Cov[w_2\gM_2,f]}{\sqrt{(\Var[w_1\gM_1]+\Var[w_2\gM_2]+2\Cov[w_1\gM_1, w_2\gM_2])\Var[f]}} \\
    & = \frac{w_1\Cov[\gM_1,f]+w_2\Cov[\gM_2,f]}{\sqrt{(w_1^2\Var[\gM_1]+w_2^2\Var[\gM_2]+2w_1w_2\Cov[\gM_1, \gM_2])\Var[f]}}.
\end{aligned}
\end{equation*}
Notice that for Pearson correlation, $\forall a > 0, \rho_{\pearson}(aX, Y) = \rho_{\pearson}(X, Y)$, and $\forall a < 0, \rho_{\pearson}(aX, Y) = -\rho_{\pearson}(X, Y)$.  So if $w_2 \neq 0$, $\exists a \in \sR, w1 = aw2$, hence $\max(\rho_{\pearson}(w_1\gM_1 + w_2\gM_2, f)) = \max(\rho_{\pearson}(a\gM_1 + \gM_2, f), -\rho_{\pearson}(a\gM_1 + \gM_2, f), \rho_{\pearson}(\gM_1, f))$.

To find out the maximum value of $\pm\rho_{\pearson}(a\gM_1 + \gM_2, f)$, we take the derivative of $\pm\rho_{\pearson}(a\gM_1 + \gM_2, f)$ regarding $a$ and set it to $0$, both yield
\begin{equation}\label{eq:weight_solution}
    a = \frac{\Cov[\gM_2,f]\Cov[\gM_1, \gM_2]-\Cov[\gM_1,f]\Var[\gM_2]}
    {\Cov[\gM_1,f]\Cov[\gM_1, \gM_2] - \Cov[\gM_2,f]\Var[\gM_1]}.
\end{equation}

If $\Cov[\gM_2,f]\Cov[\gM_1, \gM_2] \neq \Cov[\gM_1,f]\Var[\gM_2]$ and $\Cov[\gM_1,f]\Cov[\gM_1, \gM_2] \neq \Cov[\gM_2,f]\Var[\gM_1]$, then the solution given in equation~\ref{eq:weight_solution} exists and it is non-zero. Since $\pm\rho_{\pearson}(a\gM_1 + \gM_2, f)$ only has one critical point and the value of $\pm\rho_{\pearson}(a\gM_1 + \gM_2, f)$ is bounded within $[-1, 1]$ (as value of Pearson correlation is bounded within $[-1, 1]$), equation~\ref{eq:weight_solution} must correspond to a global optimum for both $\pm\rho_{\pearson}(a\gM_1 + \gM_2, f)$. As $\rho_{\pearson}(a\gM_1 + \gM_2, f) = -\rho_{\pearson}(-(a\gM_1 + \gM_2), f)$, equation~\ref{eq:weight_solution} must be a global maximum point for one of $\pm\rho_{\pearson}(a\gM_1 + \gM_2, f)$, and the global minimum point for the other. 

When the global maximum point exists and its value does not equal either $\rho_{\pearson}(\gM_1, f)$ or $\rho_{\pearson}(\gM_2, f)$, it implies that the value will be strictly larger than both $\rho_{\pearson}(\gM_1, f)$ and $\rho_{\pearson}(\gM_2, f)$, given the definition of global maximum. This concludes the proof.

\subsection{Proof of Theorem~\ref{theorem:precision-ranking}}\label{sec:proof-theorem-1}
With the uniform distribution assumption, our Theorem~\ref{theorem:precision-ranking} can be derived directly from the following classical lemma.
\begin{lemma}\label{lemma:classical}
    Suppose there are $n$ alternatives with distinct ranking $1, 2, \cdots, n$, and $m \geq 1$ alternatives $\{x_i\}_{i=1}^{m}$ are uniformly randomly selected from the $n$ alternatives. Suppose $R(x_i)$ is the ranking value of $x_i$, then 
    \begin{equation*}
    \E[\min(\{R(x_i)\}_{i=1}^{m})] = \frac{n+1}{m+1}\ .
    \end{equation*}
\end{lemma}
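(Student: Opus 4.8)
The plan is to prove Lemma~\ref{lemma:classical}, the classical statement that the expected minimum rank of $m$ uniformly-random draws (without replacement) from $\{1,\dots,n\}$ equals $(n+1)/(m+1)$. I would first set up notation: let $X = \min(\{R(x_i)\}_{i=1}^m)$, a random variable taking values in $\{1,\dots,n-m+1\}$. The cleanest route is the tail-sum formula $\E[X] = \sum_{k=1}^{n} \sP[X \geq k]$, so the first step is to compute $\sP[X \geq k]$, i.e., the probability that all $m$ chosen alternatives have rank $\geq k$. Since the $m$ alternatives are a uniformly random $m$-subset of an $n$-set, and there are $n-k+1$ alternatives with rank in $\{k,\dots,n\}$, this probability is $\binom{n-k+1}{m}/\binom{n}{m}$ (interpreted as $0$ once $n-k+1 < m$).

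Next I would evaluate the sum $\E[X] = \sum_{k=1}^{n} \binom{n-k+1}{m}/\binom{n}{m}$. Re-indexing with $j = n-k+1$ turns this into $\frac{1}{\binom{n}{m}}\sum_{j=m}^{n}\binom{j}{m}$, and the hockey-stick identity $\sum_{j=m}^{n}\binom{j}{m} = \binom{n+1}{m+1}$ collapses it to $\binom{n+1}{m+1}/\binom{n}{m}$. Expanding the binomial coefficients gives $\frac{(n+1)!/((m+1)!(n-m)!)}{n!/(m!(n-m)!)} = \frac{n+1}{m+1}$, which is the claim. (An alternative combinatorial-identity-free derivation: observe by symmetry that the $n+1$ ``gaps'' created by the $m$ chosen points partition the remaining structure so that the expected gap before the first chosen point is $(n-m)/(m+1)$, giving $\E[X] = 1 + (n-m)/(m+1) = (n+1)/(m+1)$; I would mention this as a remark but carry out the tail-sum proof as the main argument since it is the most self-contained.)

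Finally, I would close the loop back to Theorem~\ref{theorem:precision-ranking}. Under the stated uniform-distribution assumption $\sP[R_f(\gA)=t] = 1/T$ for $\gA \in \sA_{T,\gM,f}$, the relevant architectures in the top-$T$ of $\gM$ that also lie in the top-$T$ of $f$ behave, within $\{1,\dots,T\}$, exactly like $m = |\sA_{T,\gM,f}| = \rho_{\precision}(\gM,f)\cdot T$ uniform draws from $n = T$ alternatives. Since the greedy search $\gA^*_{\gM,T}$ picks the best (lowest $R_f$) among exactly these relevant top-$T$ architectures — the architectures outside $\sA_{T,\gM,f}$ have $R_f > T$ and so cannot be the argmax when $m \geq 1$ — we get $\E[R_f(\gA^*_{\gM,T})] = \E[\min] = (T+1)/(m+1) = (T+1)/(\rho_{\precision}(\gM,f)T + 1)$, using $m \geq 1$ which is exactly the hypothesis $\rho_{\precision}(\gM,f) \neq 0$.

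I expect the main obstacle to be not the algebra but the bookkeeping in the reduction: one must argue carefully that conditioning on the set $\sA_{T,\gM,f}$ having size $\rho_{\precision}(\gM,f)T$, the assumption makes its $f$-ranks a uniform random $m$-subset of $\{1,\dots,T\}$ (as opposed to $m$ i.i.d. draws, which could collide) — the phrasing ``$\sP[R_f(\gA)=t]=1/T$'' together with the fact that $R_f$ is injective on distinct architectures is what licenses the without-replacement model, and I would spell this out explicitly. Everything else is the hockey-stick identity and a one-line binomial simplification.
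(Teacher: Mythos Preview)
Your proof is correct and lands on exactly the same terminal identity $\binom{n+1}{m+1}/\binom{n}{m}=(n+1)/(m+1)$ as the paper, and your reduction of Theorem~\ref{theorem:precision-ranking} to the lemma matches the paper's. The one (minor) methodological difference: the paper computes the PMF $\sP[\min=k]=\binom{n-k}{m-1}/\binom{n}{m}$, evaluates $\sum_k k\,\sP[\min=k]$, rearranges into a double sum, and then applies the hockey-stick identity \emph{twice}; you instead use the tail-sum formula $\E[X]=\sum_k \sP[X\ge k]$ with $\sP[X\ge k]=\binom{n-k+1}{m}/\binom{n}{m}$, re-index, and apply hockey-stick \emph{once}. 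Your route is a bit shorter and avoids the double-sum rearrangement, at no cost in rigor; otherwise the arguments are essentially the same.
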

\begin{proof}

Let's first consider the probability that $\sP[\min(\{R(x_i)\}_{i=1}^{m}) = k]$ where $k \in \sZ \wedge 1 \leq k \leq n - m + 1 $. It should be noted that the lowest ranking value should be at most $n - m + 1$ as there are $m$ alternatives with distinct rankings. Given that the lowest value of ranking is $k$, the remaining $m-1$ alternatives can be selected from those alternatives with ranking values higher than $k$. Thus we have:
\begin{equation}
    \sP[\min(\{R(x_i)\}_{i=1}^{m}) = k] = \frac{\binom{n-k}{m-1}}{\binom{n}{m}}
\end{equation}
The expectation of the lowest ranking value can be derived as
\begin{equation}
\begin{aligned}
        \E[\min(\{R(x_i)\}_{i=1}^{m})] & = \sum_{k=1}^{n-m+1}k\sP[\min(\{R(x_i)\}_{i=1}^{m}) = k] \\
         & = \sum_{k=1}^{n-m+1} \frac{k\binom{n-k}{m-1}}{\binom{n}{m}} \\
         & = \frac{1}{\binom{n}{m}} \sum_{k=1}^{n-m+1} k\binom{n-k}{m-1} \\
         & = \frac{1}{\binom{n}{m}}  \sum_{k=1}^{n-m+1} \sum_{i=k}^{n-m+1} \binom{n-i}{m-1} \quad \quad (\text{rearranging}) \\
         & = \frac{1}{\binom{n}{m}}  \sum_{k=1}^{n-m+1} \binom{n-k+1}{m} \quad \quad (\text{using the hockey-stick identity}) \\
        &= \frac{\binom{n+1}{m+1}}{\binom{n}{m}} \quad \quad (\text{using the hockey-stick identity}) \\
        & = \frac{\frac{(n+1)!}{(n-m)!(m+1)!}}{\frac{n!}{(n-m)!m!}} \\
         &= \frac{n+1}{m+1} ,
 \end{aligned}
\end{equation}
which concludes the proof.
\end{proof}
\paragraph{Proof of Theorem~\ref{theorem:precision-ranking}} 
For $\rho_{\precision}(\gM, f) \neq 0$, it can be regarded as taking $\sA_{T, \gM, f}$ where $|\sA_{T, \gM, f}| = \rho_{\precision}(\gM, f)T$ out of the top $T$ architectures in $f$. As $R_f(\gA_{\gM, T}^*) =  \min(R_f(\gA), \gA \in \sA_{T, \gM, f})$. Combining with the uniform distribution assumption $\sP[R_f(A)=t]=1/T$, the result can be directly derived from Lemma~\ref{lemma:classical}, which concludes the proof.

\paragraph{Remark} 
The uniform distribution assumption $\sP[R_f(A)=t]=1/T$ is based on the fact, given that we do not have prior knowledge about the distribution of the estimation metric $M$ and the objective evaluation metric $f$, we assume any permutation of architectures has equal probability to be produced by $M$ and $f$, i.e., probability of $1/N!$, where $N$ is the number of architectures. This is aligned with the expectation computation of uniform randomness as stated in Lemma~\ref{lemma:classical}.

\subsection{Proof of Theorem~\ref{theorem:expectation-of-rank}}\label{sec:proof-theorem-2}
Before we present the proof, let us first formally introduce the definition of linear partial monitoring game, global observability, information directed sampling (IDS), and their extension to reproducing kernel Hilbert Spaces (RKHS) (i.e., BO). These concepts are proposed by \citep{linear-partial-monioring}. Additionally, we will introduce an important theorem that establishes the regret bound for a globally observable game. Subsequently, we will demonstrate how our algorithm fits into this framework and derive the upper bound on the expected ranking of $\widetilde{\gA}_{\sM, T}^*$. Some proof tricks are inspired from \citep{precision-regret}. 

\paragraph{Linear Partial Monitoring Game} Let $\gX \subset \sR^d$ be an action set, and $\theta \in \sR^d$ be the unknown parameter to generate the reward. For an action $x \in \gX$, the reward is given as $\langle x, \theta \rangle$, and there is a corresponding \textbf{known} linear observation operator $A_x \in \sR^{d\times m}$ that produces an $m$-dimension observation as $a = A_x^{\top}\theta + \epsilon$ where $\epsilon$ is an $\xi$-subgaussian noise vector. Suppose the game is played for $n$ rounds; in round $t$, the learner selects action $x_t$. The goal of the game is to minimize the cumulative regret $Reg_n = \sum_{t=1}^n \langle x^* - x_t, \theta \rangle $, where $x^* = \argmax_{x \in \gX}\langle x, \theta \rangle$ represents the optimal action.

\paragraph{Global Observability} A linear partial monitoring game is global observable iff $\forall x,y \in \gX, x - y \in \text{span}(A_z, z \in \gX)$. Here, $\text{span}(A_z, z \in \gX)$ is defined as the span of all the columns of the matrices $(A_z, z \in \gX)$.

\paragraph{IDS} IDS is a sampling strategy to minimize the information ratio when sampling a new action $x$ from the proposed distribution $\mu$. Specifically, in the round $t$, the proposed distribution is defined as $\mu_t = \argmin_{\mu}\frac{\E_{\mu}[\Delta_t(x)]^2}{\E_{\mu}[I_t(x)]}$, where $\Delta_t(x)$ is the conservative gap estimate such that $\Delta_t(x) \geq \langle x^* - x, \theta \rangle$ and $I_t(x)$ is the information gain.

\paragraph{Partial Monitoring in RKHS} Also known as the kernelized setting of partial monitoring, the action set now is defined as $\gX_0$, which is not necessarily a subset of $\sR^d$. For every action $x \in \gX_0$, it non-linearly depends on the features through a positive-definite kernel $k:\gX_0 \times \gX_0 \rightarrow \sR$. Let $\gH$ be the corresponding RKHS of the given kernel $k$. The unknown parameter is now defined as $f \in \gH$ (instead of $\theta$), and for the given action $x$, the reward takes the form of $f(x) = \langle k_x, f \rangle$. The known observation operator is now defined as $A_x: \gH \rightarrow \sR^m$, and thus the observation is $a = A_xf + \epsilon$. The cumulative regret for a game played for $n$ rounds is $Reg_n = \sum_{t=1}^n f(x^*) - f(x) = \langle k_{x^*} - k_{x}, f \rangle$ where $x^* = \argmax_{x \in \gX_0}f(x)$. The game is global observable iff $\forall x,y \in \gX_0, k_x - k_y \in \text{span}(A_z, z \in \gX_0)$.

\begin{theorem}[Corollary 18 in \citep{linear-partial-monioring}]\label{theorem:bounded-regret-for-RKHS-global-observable}
For the kernelized setting of partial monitoring using IDS as a sampling policy, if the game is global observable, then $Reg_n \leq  \gO(n^{2/3}(\alpha\beta_n(\gamma_n + \log\frac{1}{\delta})^{1/3})$ with probability at least $1-\delta$ where $\alpha$ is the alignment constant where the value is bounded for the global observable game, and $\beta_n$, $\gamma_n$ are variables related to $n$ but only logarithmically depend on $n$, i.e., $\beta_n, \gamma_n \leq \gO(\log n)$.
\end{theorem}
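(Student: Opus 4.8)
The plan is to reconstruct the frequentist information-directed-sampling (IDS) analysis for the kernelized partial monitoring game defined above, following the information-ratio methodology of \citet{linear-partial-monioring}. The argument rests on three pillars: (i) a self-normalized concentration argument producing confidence sets for the unknown $f \in \gH$, (ii) a bound on the IDS information ratio that exploits global observability through the alignment constant $\alpha$, and (iii) a summation step converting the per-round information ratio into a cumulative regret bound via the maximum information gain $\gamma_n$. The crux is step (ii), where the $n^{2/3}$ rate (rather than the $\sqrt{n}$ rate of locally observable games) must emerge.

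First I would build the confidence sets. Using the observations $a_s = A_{x_s}f + \epsilon_s$ collected through round $t$ and a self-normalized martingale concentration inequality lifted to the RKHS $\gH$, one obtains, with probability at least $1-\delta$, a confidence ellipsoid $\gC_t$ containing $f$ simultaneously for all $t \leq n$, whose radius is governed by the parameter $\beta_n$; this $\beta_n$ depends on the subgaussian constant $\xi$, the kernel, and $\log(1/\delta)$, and grows only polylogarithmically in $n$. On this high-probability event I would take the conservative gap estimate $\Delta_t(x)$ as the optimistic over-estimate $\Delta_t(x) = \max_{g \in \gC_t}\langle k_{x^*(g)} - k_x, g\rangle$, which upper-bounds the true instantaneous regret $\langle k_{x^*} - k_x, f\rangle$ whenever $f \in \gC_t$, so that it suffices to control $\sum_{t=1}^{n}\Delta_t(x_t)$.

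The crux is the information-ratio bound adapted to the \emph{globally} (but possibly not \emph{locally}) observable regime. By definition IDS selects $\mu_t$ minimizing $\E_\mu[\Delta_t(x)]^2/\E_\mu[I_t(x)]$, so it suffices to exhibit \emph{one} distribution with a favorable ratio; global observability is exactly the condition guaranteeing that every reward gap is identifiable from observations, i.e.\ $k_x - k_y \in \text{span}(A_z : z \in \gX_0)$, and the alignment constant $\alpha$ quantifies how much observation variance is needed to estimate such a gap. Because observations reveal reward differences only \emph{indirectly}, the ratio cannot be bounded by a constant as in the locally observable case; instead the distinctive feature is a \textbf{cubic} (rather than quadratic) coupling between the instantaneous gap and the information gain, of the form $\E_{\mu_t}[\Delta_t(x)]^3 \lesssim \alpha\beta_t\,\E_{\mu_t}[I_t(x)]$. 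Establishing this cubic coupling rigorously in the kernelized setting — tracking precisely how $\alpha$ enters, and verifying that global observability guarantees the stated information gain per unit of regret — is the main obstacle, and it is what ultimately degrades the exponent from $1/2$ to $2/3$.

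Finally I would sum over rounds. From the cubic coupling together with the telescoping capacity bound $\sum_{t=1}^{n} I_t(x_t) \leq \gamma_n$ on the total information gain, one gets $\sum_{t=1}^{n}\Delta_t(x_t)^3 \lesssim \alpha\beta_n(\gamma_n + \log\tfrac{1}{\delta})$, and then Hölder's inequality with exponents $3$ and $3/2$ yields
\begin{equation*}
\sum_{t=1}^{n}\Delta_t(x_t) \leq \Big(\sum_{t=1}^{n}\Delta_t(x_t)^3\Big)^{1/3} n^{2/3} \leq \big(\alpha\beta_n(\gamma_n+\log\tfrac{1}{\delta})\big)^{1/3} n^{2/3}.
\end{equation*}
Since $\beta_n$ and $\gamma_n$ depend only polylogarithmically on $n$, this is $\widetilde{\gO}(n^{2/3})$, and it holds on the confidence event, i.e.\ with probability at least $1-\delta$. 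Absorbing the polylogarithmic factors into the $\widetilde{\gO}(\cdot)$ notation and recalling that $\sum_t \Delta_t(x_t)$ dominates the true cumulative regret on that event completes the argument and recovers the stated bound.
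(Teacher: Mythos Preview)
The paper does not prove this statement at all: it is quoted verbatim as ``Corollary 18 in \citep{linear-partial-monioring}'' and used as a black box in the proof of Theorem~\ref{theorem:expectation-of-rank}. There is therefore no ``paper's own proof'' to compare your proposal against; the authors simply invoke the result from Kirschner et al.\ and move on.

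Your sketch is a plausible reconstruction of the IDS analysis in the cited work, and the high-level structure (confidence sets via self-normalized concentration, an information-ratio bound that degrades from quadratic to cubic under mere global observability, then H\"older to extract the $n^{2/3}$ rate) is in the right spirit. That said, the cubic coupling step you flag as ``the main obstacle'' is indeed the substantive content of Kirschner et al.'s corollary, and you have only asserted it rather than proved it: the actual argument in that paper constructs an explicit exploration mixture that trades off a small amount of instantaneous regret for guaranteed information gain about \emph{every} reward difference (this is where global observability and the alignment constant $\alpha$ enter), and it is that construction---not a direct pointwise bound on $\E_{\mu_t}[\Delta_t]^3/\E_{\mu_t}[I_t]$---that delivers the rate. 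If your goal were to supply a self-contained proof here, that mixture construction would need to be spelled out; but for the purposes of the present paper, no proof is expected, and citing the external result (as the authors do) is the appropriate level of detail.
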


Next, we would like to elaborate on the two conditions we assume to be satisfied.

\paragraph{Expressiveness of the Hypothesis Space} For any two weight vectors $\vw_1, \vw_2$, let $\sA_{T, \gM(\cdot; \vw_1)} = \{\gA, R_{\gM(\cdot; \vw_1)}(\gA) \leq T\}$ and $\sA_{T, \gM(\cdot; \vw_2)} = \{\gA, R_{\gM(\cdot; \vw_2)}(\gA) \leq T\}$. Then $\forall \gA \in \sA_{T, \gM(\cdot; \vw_1)} \Delta \sA_{T, \gM(\cdot; \vw_2)}$, $\exists \vw', R_{\gM(\cdot; \vw')}(\gA) = 1$. In simpler terms, this condition necessitates that if the top $T$ architectures of the estimation metrics generated by two weight vectors differ, there must always be a way to evaluate the performance of these architectures through a linear combination. Consequently, the hypothesis space produced by the linear combination must be sufficiently expressive. In practice, to enhance the expressiveness of the linear combination, we normalize the values of training-free metrics to the range $[0, 1]$, before performing the linear combination. This normalization increases the likelihood that more architectures can be ranked first by some weight vectors, as no single training-free metric dominates the estimation metric. Our experiments and ablation studies suggest that this condition generally holds for the training-free metrics proposed in the literature and the benchmarks commonly employed in the community.

\paragraph{Predictable Ranking through Performance} For $\forall \gA$, suppose $f(\gA)$ is known, then $\exists \widetilde{R}_f, \1\{\widetilde{R}_f(\gA) \leq T\} = \1\{R_f(\gA) \leq T\} + \epsilon$ where $\epsilon$ is an $\xi$-subgaussian noise. This assumption requires that we can approximately determine if an architecture is top $T$ in $f$ given its performance, which means the ranking is predictable by the performance.
Notice that this ranking threshold estimator does not need to be explicitly specified,  but is used to ensure that for our Algorithm~\ref{alg:bo-explore}, observing $f(\gA)$ is approximately equivalent to observing $\1\{R_f(\gA) \leq T\}$ with search budget $T$.
In practice, the benchmark widely used in the community generally satisfies such predictable requirements.

To begin with our proof for Theorem~\ref{theorem:expectation-of-rank}, we first need to prove the global observability of the game to evaluate the expectation of $\rho_{\precision}(\gM(\cdot; \widetilde{\vw}^*), f)$.
\begin{theorem}
    If the conditions of \textbf{Expressiveness of the Hypothesis Space} and \textbf{Predictable Ranking through Performance} can be satisfied, then
    \begin{equation*}
        \E[\rho_{\precision}(\gM(\cdot; \widetilde{\vw}^*), f)] \geq \rho_{\precision}^*(\gM_{\sM}, f) - q_TT^{-1/3},
    \end{equation*}
    with probability arbitrary close to 1 where $\rho_{\precision}^*(\gM_{\sM}, f)$ is defined in Theorem~\ref{theorem:expectation-of-rank}, $q_T = C\alpha\beta_T\gamma_T^{1/3}$ where $\alpha, \beta_T, \gamma_T$ is specified in Theorem~\ref{theorem:bounded-regret-for-RKHS-global-observable}, and $C > 0$ is a universal constant.
\end{theorem}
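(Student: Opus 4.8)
The plan is to cast Algorithm~\ref{alg:bo-explore} as an instance of the kernelized partial monitoring game from \citet{linear-partial-monioring}, verify that the game is globally observable under the two stated conditions, invoke Theorem~\ref{theorem:bounded-regret-for-RKHS-global-observable} to get a sublinear regret bound, and then convert the cumulative-regret bound into a lower bound on $\E[\rho_{\precision}(\gM(\cdot;\widetilde{\vw}^*),f)]$ via a standard best-iterate argument. Concretely, I would first set up the correspondence: the action set is $\gX_0 = \{\vw\}$ (the weight-vector space), the unknown parameter $f$ lives in the RKHS $\gH$ of the GP kernel used in BO, the reward at action $\vw$ is $\rho_{\precision}(\gM(\cdot;\vw),f)$ (so that $\max_\vw$ of the reward is $\rho_{\precision}^*(\gM_{\sM},f)$), and the observation at $\vw$ is $f(\gA(\vw))$, which by the \textbf{Predictable Ranking through Performance} condition is, up to $\xi$-subgaussian noise, equivalent to $\1\{R_f(\gA(\vw)) \le T\}$. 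The cumulative regret of this game is exactly $Reg_T$ as defined in Equation~\ref{eq:regret}.

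Next I would check global observability: one needs $k_{\vw_1} - k_{\vw_2} \in \mathrm{span}(A_{\vw}, \vw \in \gX_0)$ for every pair of weight vectors. The reward difference $\rho_{\precision}(\gM(\cdot;\vw_1),f) - \rho_{\precision}(\gM(\cdot;\vw_2),f)$ only depends on the indicator vector $(\1\{R_f(\gA)\le T\})_{\gA}$ restricted to the symmetric difference $\sA_{T,\gM(\cdot;\vw_1)} \Delta \sA_{T,\gM(\cdot;\vw_2)}$; the \textbf{Expressiveness of the Hypothesis Space} condition guarantees that for each such architecture $\gA$ there is a weight vector $\vw'$ ranking $\gA$ first, so the observation operator $A_{\vw'}$ (which reveals $\1\{R_f(\gA)\le T\}$) supplies the missing span direction. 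Assembling these across the symmetric difference shows the reward difference is recoverable from observations, i.e.\ the game is globally observable, with alignment constant $\alpha$ bounded. Then Theorem~\ref{theorem:bounded-regret-for-RKHS-global-observable} with $n = T$ gives, with probability at least $1-\delta$,
\begin{equation*}
Reg_T \le C\,\alpha\,\beta_T\,(\gamma_T + \log(1/\delta))^{1/3}\,T^{2/3},
\end{equation*}
with $\beta_T, \gamma_T = \gO(\log T)$.

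Finally I would turn this into the claimed bound. Since $\widetilde{\vw}^* = \argmax$ over queried weight vectors of $f(\gA(\vw))$, the per-round regret summed over $T$ rounds controls the gap at the best iterate: $\rho_{\precision}^*(\gM_{\sM},f) - \E[\rho_{\precision}(\gM(\cdot;\widetilde{\vw}^*),f)] \le Reg_T / T \le C\alpha\beta_T\gamma_T^{1/3}\,T^{-1/3} =: q_T T^{-1/3}$ (absorbing $\log(1/\delta)$ into the constant and taking $\delta$ arbitrarily small to get "probability arbitrarily close to $1$"), which rearranges to the stated inequality. The technical care here is that the best-iterate gap is bounded by the average regret only because the reward is the same function being queried each round and IDS controls cumulative regret; I would also note that the subgaussian-noise equivalence between observing $f(\gA)$ and observing $\1\{R_f(\gA)\le T\}$ is precisely what lets the observations feed the partial-monitoring machinery.

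The main obstacle I expect is the global-observability verification — making the span argument fully rigorous requires being careful about how the finite, combinatorial family of indicator observations $\{\1\{R_f(\gA)\le T\}\}$ embeds into the RKHS observation operators $A_\vw$, and confirming that the \textbf{Expressiveness} condition really does produce enough independent directions to span every reward difference rather than just "most" of them. The subsequent regret-to-expectation conversion is routine, as is the bookkeeping on the logarithmic factors $\beta_T, \gamma_T$.
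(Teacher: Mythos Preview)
Your proposal is correct and follows essentially the same approach as the paper. The only thing the paper adds beyond your outline is the concrete embedding you flag as your ``main obstacle'': it realizes $k_{\vw}$, $A_{\vw}$, and the unknown parameter as explicit $N$-dimensional vectors indexed by architectures, namely $(k_{\vw})_i = \1(R_{\gM(\cdot;\vw)}(\gA_i)\le T)$, $(A_{\vw})_i = \1(R_{\gM(\cdot;\vw)}(\gA_i)=1)T$, and $(\theta)_i = \1(R_f(\gA_i)\le T)/T$, so that $\langle k_{\vw},\theta\rangle = \rho_{\precision}(\gM(\cdot;\vw),f)$ and $A_{\vw}^\top\theta = \1(R_f(\gA(\vw))\le T)$; with this representation the span argument you sketch for global observability becomes literal linear algebra over $\sR^N$.
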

\begin{proof}
    To fit our algorithm into the framework of the kernelized setting of partial monitoring, we set the action as the weight vector $\vw$ and the corresponding reward as $\rho_{\precision}(\gM(\cdot; \vw), f)$. To derive the closed-form of the kernel $k_{\vw}$, the observation operator $A_{\vw}$ and the unknown parameter $\theta$ (to prevent ambiguity with the objective evaluation metric $f$, we still denote the unknown parameter generating the reward $\langle k_{\vw}, \theta \rangle$ as $\theta$), we propose to configure $k_{\vw}$, $A_{\vw}$ and $\theta$  as  $N$-dimension vectors where $N$ is the number of the architectures, and ${(k_{\vw})}_i = \1 (R_{\gM(\cdot; \vw)}(\gA_i) \leq T)$, ${(A_{\vw})}_i = \1 (R_{\gM(\cdot; \vw)}(\gA_i) = 1)T$, ${(\theta)}_i = \1 (R_f(\gA_i) \leq T)/T$. 
    
    Intuitively, $(k_{\vw})_i$ denotes whether $\gA_i$ is within the top $T$ architectures under $\gM(\cdot; \vw)$, $(A_{\vw})_i$ indicates whether $\gA_i$ is the top architecture under $\gM(\cdot; \vw)$, and $(\theta)_i$ reveals whether $\gA_i$ is among top $T$ architectures under $f$.  Therefore, the observation is given as $a_{\vw} = A_{\vw}^{\top}\theta + \epsilon$, where $a_{\vw}$ equals to $\1(R_f(\gA(\vw)) \leq T) + \epsilon$, indicating whether $\gA(\vw)$ is a top $T$ architecture under $f$. Here, $\gA(\vw)$ represents the top architecture under $\gM(\cdot; \vw)$, as per the earlier notation. With a ranking threshold predictor $\widetilde{R}_f$ as discussed in \textbf{Predicatable Ranking through Performance} and the performance of $f(\gA(\vw))$, we can provide such observation with a $\xi$-subgaussian noise $\epsilon$. The reward is computed as $\langle k_{\vw}, \theta \rangle$, which corresponds to $\rho_{\precision}(\gM(\cdot; \vw), f)$. Notice that for any given weight vector $\vw$, the values of $k_{\vw}$ and $A_{\vw}$ are always known to the learner (i.e., the learner always which architecture(s) are the top $T$/top 1 architecture(s) under $\gM(\cdot; \vw)$),  thereby fitting our algorithm within the kernelized setting of partial monitoring.

    When the condition of \textbf{Expressiveness of the Hypothesis Space} is satisfied, the condition can be directly used to derive the global observability, as $k_{\vw_1} - k_{\vw_2}$ literally refers to the set difference $\sA_{T, \gM(\cdot; \vw_1)} \Delta \sA_{T, \gM(\cdot; \vw_2)}$. 

    With the Theorem~\ref{theorem:bounded-regret-for-RKHS-global-observable}, we can obtain
    \begin{equation*}
        \begin{aligned}
            \E[\rho_{\precision}(\gM(\cdot; \widetilde{\vw}^*), f)] &= \rho_{\precision}^*(\gM_{\sM}, f) - \frac{Reg_T}{T} \\
            & \geq \rho_{\precision}^*(\gM_{\sM}, f) - \frac{q_TT^{2/3}}{T} \\
            & = \rho_{\precision}^*(\gM_{\sM}, f) - q_TT^{-1/3},
        \end{aligned}
    \end{equation*}
which concludes the proof.
\end{proof}
\paragraph{Proof of Theorem~\ref{theorem:expectation-of-rank}} As we have obtained the lower bound of $\E[\rho_{\precision}(\gM(\cdot; \widetilde{\vw}^*), f)]$, and as we search for $T - T_0$ architectures for exploitation to obtain $\widetilde{\gA}_{\sM, T}^*$ as stated in Section~\ref{sec:exploit}, Theorem~\ref{theorem:expectation-of-rank} can be directly derived from Theorem~\ref{theorem:precision-ranking} and Lemma~\ref{lemma:classical}, which concludes the proof.

\subsection{Derivation for the claim about the Influence of T in Section~\ref{sec:futher-discussion}}\label{sec:derivation}

Suppose that $\rho_{\precision}^*(\gM, f) = \max_{\gM \in \sM}(\rho_{\precision}(\gM, f))$, then $\E[R_f(\widetilde{\gA}_{\sM, T}^*)] \leq \min_{\gM \in \sM}(\E[R_f(\gA_{\gM, T}^*)])$ implies $\epsilon \geq q_TT^{-1/3}/\rho_{\precision}^*(\gM, f) + 1/(1-\alpha)$, where $\alpha = T_0 / T$ and $\epsilon = \rho_{\precision}^*(\gM_{\sM}, f) / \rho_{\precision}^*(\gM, f)$. The right-hand side can be regarded as the threshold to define how good $\rho_{\precision}^*(\gM_{\sM}, f)$ should be so that the outperforming can be achieved, and this threshold decreases when $T$ increases, which supports our initial claim. 

\section{Experimental Details}\label{sec:experimental-details}
\subsection{Benchmark Information}
\paragraph{NAS-Bench-201 \citep{nasbench201}} NAS-Bench-201 is a widely used NAS benchmark that has served as the testing ground for various training-based NAS algorithms \citep{pham2018efficient, darts, gdas} as well as training-free NAS metrics \citep{abdelfattah2020zero, mellor2021neural}.   The main focus of its search space is the structure of a cell in a neural architecture. In this context, a cell is composed of 4 nodes interconnected by 6 edges. Each edge can be associated with one of five operations: $3 \times 3$ convolution, $1 \times 1$ convolution, $3 \times 3$ average pooling, zeroize, or skip connection. Consequently, the search space consists of $5^6 =$ 15,625 unique neural architectures. These architectures are evaluated across three datasets: CIFAR-10 (C10) \citep{cifar}, CIFAR-100 (C100), and ImageNet-16-120 (IN-16) \citep{imagenet-16-120}.

\paragraph{TransNAS-Bench-101 \citep{tnb101}} TransNAS-Bench-101 is a recent addition to the NAS benchmarks and has been less explored by NAS algorithms. Unlike NAS-Bench-201, which focuses solely on the cell structure, TransNAS-Bench-101 explores both micro-cell structure and macro skeleton structure, leading to two distinct search spaces: micro and macro. The micro search space resembles NAS-Bench-201's structure but features only four operations (omitting the average pooling operation), resulting in a total of $4^6 =$ 4,096 neural architectures. In the macro search space, the cell structure is fixed while the skeleton is varied. The skeleton contains 4 to 6 modules, each with two residual blocks. Each module can opt to downsample the feature map, double the channels, or do both. Across the entire skeleton, downsampling can occur 1 to 4 times, and channel doubling can happen 1 to 3 times, yielding a total of 3,256 architectures. TransNAS-Bench-101 evaluates these architectures on seven different vision tasks, all using a single dataset of 120K indoor scene images, derived from the Taskonomy project \citep{zamir2018taskonomy}. The tasks include scene classification (Scene), object detection (Object), jigsaw puzzle (Jigsaw), room layout (Layout), semantic segmentation (Segment.), surface normal (Normal), and autoencoding (Autoenco.).

\paragraph{DARTS \citep{darts}} This search space searches on two cells, where each cell has 2 input nodes and 4 intermediate nodes with 2 predecessors each. On each edge between two nodes, it can have 7 non-zero operation choices. DARTS search space is not a tabular benchmark, as it contains a total of \(10^{18}\) unique architectures. It evaluates architectures in 3 datasets: CIFAR-10 (C10), CIFAR-100 (C100), and ImageNet \citep{imagenet} datasets. 

\subsection{Implementation Details of \alg{}}\label{sec:implemtation-details}
\paragraph{Algorithm~\ref{alg:bo-explore} Details}
 In Section~\ref{sec:explore}, we briefly discussed the use of \emph{Bayesian optimization}(BO) in our algorithm. We would now delve into the implementation of the Gaussian Process (GP) to construct the prior and posterior distributions and the choice of our acquisition function for determining the next queried weight vector. Given a set of observations $[f(\gA(\vw_1)), \dotsc,  f(\gA(\vw_t))]$, we assume that they are randomly drawn from a prior probability distribution, in this case, a GP. The GP is defined by a mean function covariance (or kernel) function. We set the mean function to be a constant, such as 0, and choose the Matérn kernel for the kernel function. Based on these observations and the prior mean and kernel functions, we calculate the posterior mean and kernel function as $\mu(\vw)$ and $k(\vw, \vw')$, respectively, following Equation (1) in \citep{gp-ucb}. We then derive the variance as $\sigma^2(\vw) = k(\vw, \vw)$. As for the acquisition function, we adopt the upper confidence bound (UCB) \citep{gp-ucb}, as suggested by \cite{linear-partial-monioring} for its deterministic IDS properties. The next weight vector is chosen as $\vw_{t+1} = \argmax_{\vw} \mu(\vw) + \kappa \sigma(\vw)$, where $\kappa$ is the exploration-exploitation trade-off constant that regulates the balance between exploring the weight vector space and exploiting the current regression results. 
 
 Owing to the capabilities of BO in solving black-box problems through global optimization, coupled with its efficiency, it has enjoyed a prolonged period of application within the field of NAS \citep{cao2018learnable, white2021bananas, hnas, proxybo}. Although there is a substantial body of theoretical research in the field of BO \citep{gp-ucb, dai2019bayesian, dai2020federated, dai2022sample}, and considerable exploration of theoretical studies within the realm of NAS \citep{ning2021evaluating, shu2019understanding, hnas}, to the best of our knowledge, this work represents the first instance of applying theoretical findings from the domain of BO to the field of NAS, thereby proposing a bounded expected performance with theoretical backing.
 For a more comprehensive understanding of BO, we refer to \citep{gp-ucb}, and for implementation details, we refer to \citep{bo-imp}, both of which guided our experimental setup.

\paragraph{Implementation Details of \alg{} on NAS-Bench-201 and TransNAS-Bench-101}
For both NAS-Bench-201 and TransNAS-Bench-101, we utilize the six training-free metrics outlined in \citep{abdelfattah2020zero}: \textit{grad\_norm}, \textit{snip}, \textit{grasp}, \textit{fisher}, \textit{synflow}, and \textit{jacob\_cov}. We pre-calculate these metrics' values for all neural architectures across all tasks. To ensure reproducibility, we directly utilize the computed training-free metrics from Zero-Cost-NAS \citep{abdelfattah2020zero} and NAS-Bench-Suite-Zero \citep{nas-bench-suite-zero} for NAS-Bench-201 and TransNAS-Bench-101, respectively. For a particular task, we normalize a metric's values to fit within the [0, 1] range. We define the search range of $\vw$ to be any real value between -1 and 1, given the consideration that it's used for a weighted linear combination to rank architectures. The ranking depends on relative weights rather than their absolute magnitudes, hence a normalized range covers all real space equivalently for the purpose of ranking. For NAS-Bench-201, we use the CIFAR-10 validation performance after 12 training epochs (i.e., "hp=12") from the tabular data in NAS-Bench-201 as the objective evaluation metric $f$ for all three datasets and compute the search cost displayed in Table~\ref{tab:sota-nasbench201} in the same manner (which is the training cost of 20 architectures). However, we report the full training test accuracy of the proposed architecture after 200 epochs. As for TransNAS-Bench-101, we note that for tasks \textit{Segment.}, \textit{Normal}, and \textit{Autoenco.} on both micro and macro datasets, the training-free metric \textit{synflow} is inapplicable due to a tanh activation at the architecture's end, so we only use the remaining five training-free metrics. Moreover, given the considerable gap between validation and test performances in TransNAS-Bench-101, we only report our proposed architecture's validation performance.

As for the DARTS search space, we refer to Section~\ref{sec:darts} for the implementation details.

\paragraph{Reported Search Costs of \alg{}}
For Table~\ref{tab:sota-nasbench201}, Table~\ref{tab:imagenet} and Table~\ref{tab:cifar}, the search costs of \alg{} consist of three components: the computation costs of training-free metrics of the entire search space (which we treat as negligible in early discussion), the queries for objective evaluation performance in BO, and in greedy search. To reduce the search costs incurred in queries for objective evaluation performance, we follow \citep{hnas} to apply the validation performance of each candidate architecture that is trained from scratch for only a limited number of epochs (e.g., 12 epochs in NAS-Bench-201) to approximate the true architecture performance, which in fact is quite computationally efficient (e.g., about 110 GPU-seconds for the model training of each candidate in NAS-Bench-201, and 0.04 GPU-day for ImageNet in DARTS search space). Of note, such an approximation is already reasonably good to help find well-performing architectures, as supported by our results.

\subsection{Implementation Details of NAS Baselines}
Throughout our experiments, we focus on the following query-based NAS algorithms:
 \paragraph{Random Search (RS)} As implied by its name, RS randomly assembles a pool of architectures, evaluates the performance of each, and proposes the highest-performing one. Despite its simplicity, this method often proves a strong baseline for NAS algorithms \citep{yu2019evaluating, li2020random}, matching the performance stated in Lemma~\ref{lemma:classical} .

 \paragraph{Regularized Evolutionary Algorithm (REA) \citep{amoebanet}} Like RS, REA initiates by assembling a small pool of architectures and evaluating each one's performance. In our experiments, the initial pool size is chosen to be a third of the total search budget, $T/3$. Then, the top-performing architecture is removed from the pool and applied to a \emph{mutation} to create a new architecture. This new architecture is evaluated and added to the pool. In our context, a mutation involves altering one operation on an edge within the cell (or for TransNAS-Bench-101-macro, we randomly add or remove a downsample/doubling operation on one module, and ensure the generated architecture is valid). As it is typically assumed that a good-performing architecture's neighbor will perform better than a randomly chosen architecture, REA is expected to outperform RS.
\paragraph{REINFORCE \citep{williams1992simple}} REINFORCE is a reinforcement learning algorithm that utilizes the objective evaluation metric as the reward to update the policy of choosing architectures. We follow the configuration outlined in \citep{nasbench201} to use Adam \citep{KingmaB14} with the learning rate of 0.01 as the optimizer to update the policy and use the exponential moving average with the momentum of 0.9 as the reward baseline. 
\paragraph{Hybrid Neural Architecture Search (HNAS) \citep{hnas}} \citet{hnas} aims to select the architecture with the minimal upper bound on generalization error, where the upper bound for each architecture is specified with the corresponding training-free metric value and two unknown hyperparameters. HNAS employs BO with the observation of the objective evaluation metric performance of the chosen architecture to optimize the values of two hyperparameters. As HNAS requires that the training-free metric should be gradient-based, we follow their recommendation and use \textit{grad\_norm} as the training-free metric for our experiments.

\paragraph{Average Training-Free Metric Performance (Avg.)} We apply greedy search as stated in Section~\ref{sec:exploit} to traverse the top $T$ architectures for each training-free metric and select the architecture with the highest objective evaluation performance. The reported performance is the average across all these selected architectures for each training-free metric.

\paragraph{Best Training-Free Metric Performance (Best)} The experimental setup is identical to that of Avg., however, in this case, we report the performance of the single best architecture across all those selected for each training-free metric.

\section{More Empirical Results}\label{sec:more-empirical}
\subsection{Additional Results on NAS Benchmark}
In this section, we report additional empirical results on NAS-Bench-201 and TransNAS-Bench-101. The results are provided in Table~\ref{tab:sota-nasbench201-ranking},~\ref{tab:sota-transnasbench101-performance}, and Figure~\ref{fig:more_nb201},~\ref{fig:more_tnb101_micro},~\ref{fig:more_tnb101_macro}. All the figures on the left side demonstrate the objective evaluation performance while the figures on the right side demonstrate the corresponding ranking. 
These additional results align well with our initial results and arguments presented in Section~\ref{sec:main-results}, further validating that \alg{} offers reliable and top-tier performance across various tasks, achieving competitive as the best training-free metric's performance without prior knowledge about which training-free metric will perform the best. Particularly, \alg{} exhibits a clear superior performance in tasks like \textit{micro-Object}, \textit{micro-Jigsaw}, \textit{micro-Normal}, \textit{macro-Object}, and \textit{macro-Jigsaw}, demonstrating its potential to boost training-free metrics beyond their original performance. Compared with HNAS, our proposed \alg{} shows more stable and higher performance. Despite the decent performance of HNAS in a few tasks such as \textit{micro-Segment.}, it struggles with inconsistency issues inherent to single training-free metrics, leading to overall worse performance. In contrast, \alg{} consistently delivers robust and competitive performance, further emphasizing its value for NAS. These additional results strengthen our initial conclusions, confirming the robustness, consistency, and superior performance of \alg{} across various tasks on both NAS-Bench-201 and TransNAS-Bench-101 benchmarks.

\begin{figure}[t]
\centering
\hspace{-2mm}\includegraphics[width=0.8\columnwidth]{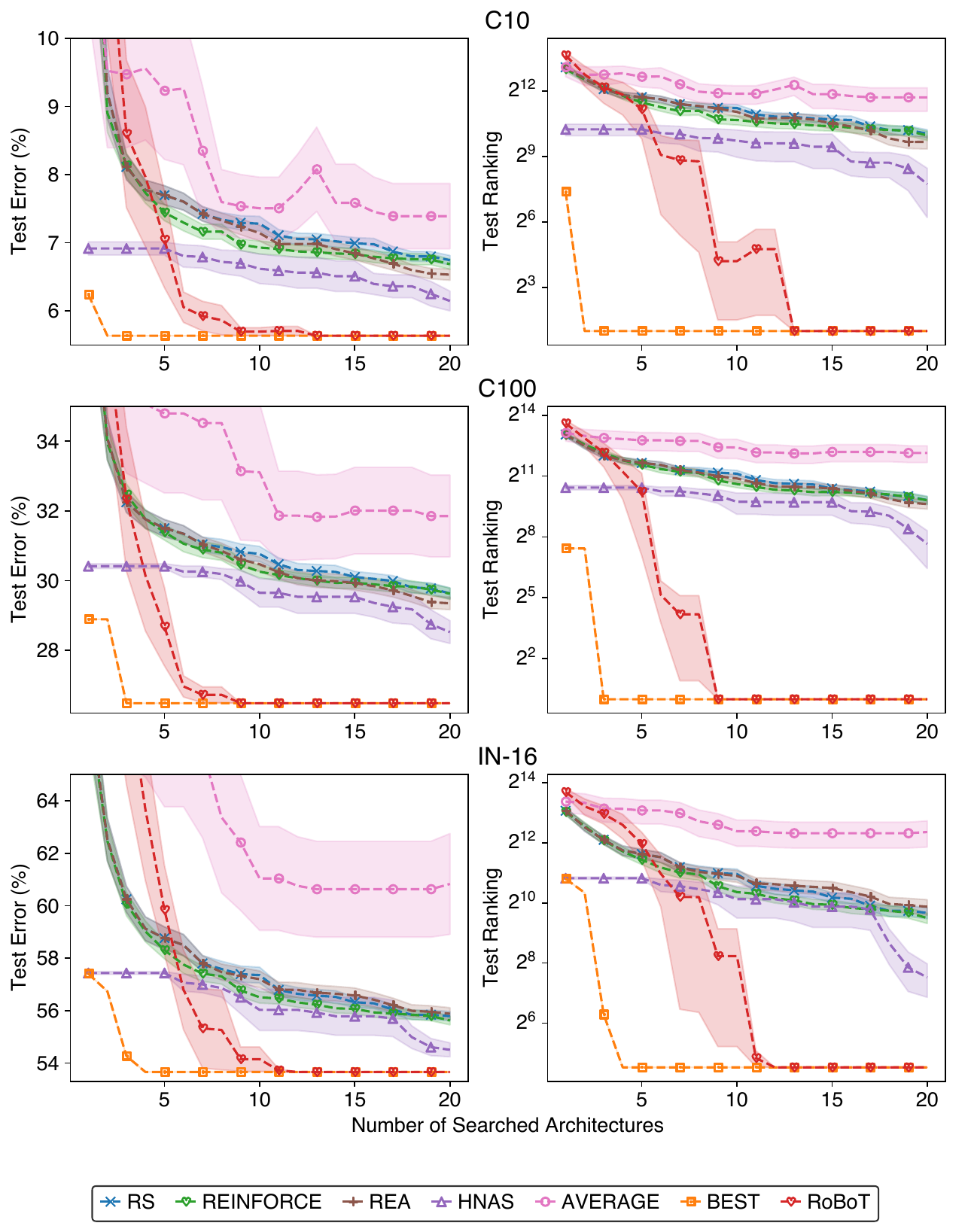}
\caption{Comparison between \alg{} and other NAS baselines in NAS-Bench-201 regarding the number of searched architectures. Note that \alg{} and HNAS are reported with the mean and standard error of 10 independent searches, while RS, REA, and REINFORCE are reported with 50 independent searches. 
}
\label{fig:more_nb201}
\vskip -0.15in
\end{figure}

\begin{table*}[t!]
\caption{Comparison of NAS algorithms in NAS-Bench-201 regarding the ranking of test performance.All methods query for the validation performance for 20 models. }
\label{tab:sota-nasbench201-ranking}
\centering
\begin{threeparttable}
\begin{tabular}{lcccc}
\toprule
\multirow{2}{*}{\textbf{Algorithm}} & \multicolumn{3}{c}{\textbf{Test Ranking }} \\
\cmidrule(l){2-4} 
& C10 & C100 & IN-16  &  \\
\midrule
REA & 814$\pm$1175 & 789$\pm$891 & 938$\pm$1170\\
RS (w/o sharing) & 1055$\pm$981 & 901$\pm$900 & 815$\pm$730 \\
REINFORCE & 995$\pm$1042 & 899$\pm$814 & 722$\pm$596 \\
HNAS & 212$\pm$310 & 202$\pm$257 & 183$\pm$151 \\
\midrule
Training-Free & & & \\
$\qquad \hookrightarrow$ Avg. & 3331$\pm$2926 & 4514$\pm$3082 & 5256$\pm$3785 \\
$\qquad \hookrightarrow$ Best & \textbf{3} & \textbf{1} & \textbf{24} \\
\midrule
\alg{} & \textbf{3}$\pm$0 & \textbf{1}$\pm$0 & \textbf{24}$\pm$0 \\
\bottomrule
\end{tabular}
\end{threeparttable}
\vskip -0.1in
\end{table*}

\begin{table*}[t!]
\caption{Comparison of NAS algorithms in TransNAS-Bench-101-micro regarding validation performance. The results of \alg{} and HNAS are reported with the mean and
the standard deviation of 10 independent searches,  while 50 independent searches for REA, RS, and REINFORCE.}
\label{tab:sota-transnasbench101-performance}
\centering
\resizebox{\textwidth}{!}{
\begin{tabular}{llcccccccc}
\toprule
\multirow{2}{*}{\textbf{Space}} &
\multirow{2}{*}{\textbf{Algorithm}} & 
\multicolumn{3}{c}{Accuracy (\%)} &
\multicolumn{1}{c}{L2 Loss ($\times 10^{-2}$)} &
\multicolumn{1}{c}{mIoU (\%)} &
\multicolumn{2}{c} {SSIM ($\times 10^{-2}$)} \\
\cmidrule(l){3-5} \cmidrule(l){8-9}
& & Scene   & Object  & Jigsaw  & Layout & Segment. & Normal & Autoenco. \\
\midrule
\multirow{8}{*}{\textbf{Micro}} 
& REA & 54.63$\pm$0.18 & 44.92$\pm$0.38 & 94.81$\pm$0.21 &$-$62.06$\pm$0.69  & 94.55$\pm$0.03  & 57.10$\pm$0.51 & \textbf{56.23}$\pm$0.81\\
& RS (w/o sharing) & 54.56$\pm$0.22 & 44.76$\pm$0.39 & 94.63$\pm$0.26 & $-$62.22$\pm$0.96 & 94.53$\pm$0.03 & 56.83$\pm$0.46 & 55.55$\pm$0.99\\
& REINFORCE & 54.56$\pm$0.19 & 44.69$\pm$0.34 & 94.70$\pm$0.23 &   $-$62.20$\pm$0.83  & 94.53$\pm$0.03  & 56.96$\pm$0.43 & 55.75$\pm$0.86\\
& HNAS  & 54.29$\pm$0.09 & 44.08$\pm$0.00 & 94.56$\pm$0.21 &$-$64.83$\pm$1.69  &94.57$\pm$0.00 &56.88$\pm$0.00 &48.66$\pm$0.00 \\
\cmidrule(l){2-9}
& Training-Free & & & & & & & \\
& $\qquad \hookrightarrow$ Avg. & 54.60$\pm$0.36 & 43.98$\pm$0.87 & 94.11$\pm$0.54 & $-$64.27$\pm$1.27 & 94.03$\pm$1.07  & 52.26$\pm$9.33 & 41.36$\pm$17.29\\
& $\qquad \hookrightarrow$ Best & \textbf{54.87} & \textbf{45.59} & 94.76 & $-$62.12 & \textbf{94.58} & 57.05 & 55.30\\
\cmidrule(l){2-9}
& \alg{} & \textbf{54.87}$\pm$0.00 & \textbf{45.59}$\pm$0.00 & \textbf{94.82}$\pm$0.06 & \textbf{$-$61.16}$\pm$0.86 & \textbf{94.58}$\pm$0.00  & \textbf{57.44}$\pm$0.34   & 55.42$\pm$1.05\\
\cmidrule(l){2-9}
& \textbf{Optimal} & 54.94 & 45.59 & 95.37 & $-$60.10 & 94.61 & 58.73 & 57.72 \\
\toprule
\multirow{8}{*}{\textbf{Macro}} 
& REA & 56.69$\pm$0.34 & 46.74$\pm$0.33 & 96.78$\pm$0.10 &$-$59.99$\pm$1.09  & 94.80$\pm$0.03  & 60.81$\pm$0.72 & 71.38$\pm$2.49\\
& RS (w/o sharing) & 56.53$\pm$0.28 & 46.68$\pm$0.30 & 96.74$\pm$0.19 & $-$60.27$\pm$1.08 & 94.78$\pm$0.04 & 60.72$\pm$0.72 & 70.05$\pm$3.01\\
& REINFORCE & 56.43$\pm$0.29 & 46.67$\pm$0.29 & 96.80$\pm$0.14 &   $-$60.29$\pm$1.00  & 94.78$\pm$0.04  & 60.48$\pm$0.94 & 69.21$\pm$2.55\\
& HNAS  & 55.03$\pm$0.00 & 45.00$\pm$0.00 & 96.28$\pm$0.18 &$-$61.40 $\pm$0.11  &94.79$\pm$0.00 &59.27$\pm$0.00 &57.59$\pm$0.00 \\
\cmidrule(l){2-9}
& Training-Free & & & & & & & \\
& $\qquad \hookrightarrow$ Avg. & 55.81$\pm$1.03 & 45.87$\pm$0.87 & 96.44$\pm$0.31 & $-$61.10$\pm$1.20 &  94.78$\pm$0.04  & 61.19$\pm$0.39 & 70.93$\pm$2.84\\
& $\qquad \hookrightarrow$ Best & \textbf{57.41} & 46.87 & 96.89 & \textbf{$-$58.44} & 94.83 & \textbf{61.66} & 73.51\\
\cmidrule(l){2-9}
& \alg{} & 57.35$\pm$0.13 & \textbf{46.94}$\pm$0.09 & \textbf{96.92}$\pm$0.02 & $-$58.88$\pm$0.70 & \textbf{94.85}$\pm$0.02  & \textbf{61.66}$\pm$0.00   & \textbf{73.53}$\pm$0.06\\
\cmidrule(l){2-9}
& \textbf{Optimal} & 57.41 & 47.42 & 97.02 & $-$58.22 & 94.86 & 64.35 & 74.88 \\
\bottomrule
\end{tabular}
} 
\vskip -0.1in
\end{table*}

\subsection{\alg{} in the DARTS Search Space}\label{sec:darts}
To further verify the robustness of our algorithm \alg{}, we also implement \alg{} in the DARTS \citep{darts} search space, aiming to identify high-performing architectures in the CIFAR-10/100 and ImageNet \citep{imagenet} datasets. We create a pool of 60000 architectures and evaluate their training-free metrics (same as NAS-Bench-201 and Trans-NAS-Bench-101, we evaluate six training-free metrics: \textit{grad\_norm}, \textit{snip}, \textit{grasp}, \textit{fisher}, \textit{synflow}, and \textit{jacob\_cov}, and normalize each metric’s values to fit within the [0, 1] range) on the corresponding datasets. With regards to CIFAR-10/100, \alg{} is given a search budget $T=25$, to query for the performance of the selected architectures with a 10-epoch model training.  As for ImageNet, we use a search budget $T=10$, and the performance of the selected architectures is determined when they are trained for 3 epochs. As per the methodology in the DARTS study \citep{darts}, we built 20-layer final selected architectures. These architectures have 36 initial channels and an auxiliary tower with a weight of 0.4 for CIFAR-10 and 0.6 for CIFAR-100, located at the 13th layer. We test these architectures on CIFAR-10/100 by employing stochastic gradient descent (SGD) over 600 epochs. The learning rate started at 0.025 and gradually reduced to 0 for CIFAR-10, and from 0.035 to 0.001 for CIFAR-100, using a cosine schedule. The momentum was set at 0.9 and the weight decay was 3$\times$10$^{-4}$ with a batch size of 96. Additionally, we use Cutout \citep{cutout} and ScheduledDropPath, which linearly increased from 0 to 0.2 for CIFAR-10 (and from 0 to 0.3 for CIFAR-100) as regularization techniques for CIFAR-10/100. For the ImageNet evaluation, we train a 14-layer architecture from scratch over 250 epochs, with a batch size of 1024. The learning rate was initially increased to 0.7 over the first 5 epochs and then gradually decreased to zero following a cosine schedule. The SGD optimizer was used with a momentum of 0.9 and a weight decay of 3$\times$10$^{-5}$.

We present the results in Table~\ref{tab:cifar} and Table~\ref{tab:imagenet}. Notably, despite lacking prior information on the performance of these training-free metrics in the DARTS search space, our proposed algorithm, \alg{}, still delivers competitive results when compared with other NAS techniques. It is worth noting the significant discrepancy between the validation performance (i.e., the performance of the architecture when trained for 10 epochs/3 epochs on CIFAR-10/100 / ImageNet, respectively) and the final test performance (when trained for 600 epochs/250 epochs on CIFAR-10/100 / ImageNet, respectively). While our algorithm \alg{} is not explicitly designed to address this gap, it still demonstrates its effectiveness by identifying high-performing architectures. Overall, these results further substantiate our previous assertions about the robustness of our algorithm \alg{} in Section~\ref{sec:experiments}, suggesting that our approach can be effectively applied in practical, real-world scenarios.

\begin{table*}[t]
\caption{Performance comparison among state-of-the-art (SOTA) neural architectures on CIFAR-10/100. The performance of the final architectures selected by \alg{} is reported with the mean and standard deviation of five independent evaluations. The search costs are evaluated on a single Nvidia 1080Ti.}
\label{tab:cifar}
\centering
\resizebox{\textwidth}{!}{
\begin{threeparttable}
\begin{tabular}{lcccccc}
\toprule
\multirow{2}{*}{\textbf{Algorithm}} & \multicolumn{2}{c}{\textbf{Test Error (\%)}} &
\multicolumn{2}{c}{\textbf{Params} (M)} &
\multirow{2}{2cm}{\textbf{Search Cost} (GPU Hours)} &
\multirow{2}{*}{\textbf{Search Method}} \\
\cmidrule(l){2-3} \cmidrule(l){4-5} 
& C10 & C100 & C10 & C100 & \\
\midrule 
DenseNet-BC \citep{densenet} & 3.46$^*$ & 17.18$^*$ & 25.6 & 25.6 & - & manual\\
\midrule 
NASNet-A \citep{nasnet} & 2.65 & - & 3.3 & - & 48000 & RL\\
AmoebaNet-A \citep{amoebanet} & 3.34$\pm$0.06 & 18.93$^\dagger$ & 3.2 & 3.1 & 75600 & evolution\\
PNAS \citep{pnas} & 3.41$\pm$0.09 & 19.53$^*$ & 3.2 & 3.2 & 5400 & SMBO\\
ENAS \citep{pham2018efficient} & 2.89 & 19.43$^*$ & 4.6 & 4.6 & 12 & RL\\
NAONet \citep{naonet} & 3.53 & - & 3.1 & - & 9.6 & NAO\\
\midrule
DARTS (2nd) \citep{darts} & 2.76$\pm$0.09 & 17.54$^\dagger$ & 3.3 & 3.4 & 24 & gradient\\
GDAS \citep{gdas} & 2.93 & 18.38 & 3.4 & 3.4 & 7.2 & gradient\\
NASP \citep{nasp} & 2.83$\pm$0.09 & - & 3.3 & - & 2.4 & gradient\\
P-DARTS \citep{p-darts} & 2.50 & - & 3.4 & - & 7.2 & gradient\\
DARTS- (avg) \citep{darts-} & 2.59$\pm$0.08 & 17.51$\pm$0.25 & 3.5 & 3.3 & 9.6 & gradient\\
SDARTS-ADV \citep{sdarts} & 2.61$\pm$0.02 & - & 3.3 & - & 31.2 & gradient\\
R-DARTS (L2) \citep{r-darts}  & 2.95$\pm$0.21 & 18.01$\pm$0.26 & - & - & 38.4 & gradient\\
DrNAS \citep{drnas} & 2.46$\pm$0.03 & - & 4.1 & - & 14.4 & gradient \\
\midrule
TE-NAS \citep{te-nas} & 2.83$\pm$0.06 & 17.42$\pm$0.56 & 3.8 & 3.9 & 1.2 & training-free \\
NASI-ADA \cite{nasi} & 2.90$\pm$0.13 & 16.84$\pm$0.40 & 3.7 & 3.8 & 0.24 & training-free \\
\midrule
HNAS \citep{hnas}  & 2.62$\pm$0.04 & 16.29$\pm$0.14 & 3.4 & 3.8 & 2.6 & hybrid \\
\midrule
\alg{}   &2.60$\pm$0.03 &  16.52$\pm$0.10 & 3.3 & 3.8 &3.5 & hybrid\\
\bottomrule
\end{tabular}
\end{threeparttable}
}
\end{table*}

\subsection{More Ablation Studies}\label{sec:ablation_studies}
To delve deeper into the factors impacting \alg{} and to provide interesting insights, we carry out several ablation studies as detailed below. These studies focus on the tasks of \textit{Scene}, \textit{Object}, \textit{Jigsaw}, \textit{Segement.}, \textit{Normal} on TransNAS-Bench-101-micro. The ablation studies explore various aspects of the algorithm and provide valuable findings specific to these tasks.

\paragraph{Ablation Study on Optimized Linear Combination Weights}
To understand the influence of the optimized linear combination weights $\widetilde{\vw}^*$ that used to formulate $\gM(\cdot;\widetilde{\vw}^*)$, we present the varying weights on the tasks of \textit{Scene}, \textit{Object}, \textit{Jigsaw}, \textit{Layout} in TransNAS-Bench-101-micro as well as their similarity and correlation in Table~\ref{tab:optimized-linear-combination-weights} and Figure~\ref{fig:similarity}. The results show that the optimized weights typically vary for different tasks, which aligns with the observations and motivations in our Section~\ref{sec:motivation} and further highlights the necessity of developing robust metrics that can perform consistently well on diverse tasks such as our \alg{}. In addition, for tasks with similar characteristics, e.g., the \textit{Scene} and \textit{Object} tasks, both of which are classification tasks, the optimized weights share a relatively high similarity/correlation, indicating the potential transferability of the optimized linear combination within similar tasks.

\begin{table*}[h]

\caption{The varying optimized linear combination weights on 4 tasks of TransNAS-Bench-101-micro.}
\label{tab:optimized-linear-combination-weights}
\centering
\begin{threeparttable}
\begin{tabular}{lccccccc}
\toprule
\textbf{Tasks} & grad\_norm & snip & grasp  & fisher & synflow & jacob\_cov  \\
\midrule
Scene & $-$1.00 & $-$0.08 & $-$0.97 & 1.00 & 1.00 & 1.00 \\
Object & 0.03 & $-$0.21 & $-$0.76 & 0.51 & 0.95 & 0.16 \\
Jigsaw & $-$0.74 & 0.18 & 0.04 & $-$1.00 & $-$1.00 & 1.00 \\
Layout & $-$0.65  & $-$0.27 & 0.57 & $-$0.48 & 1.00 & 0.67 \\
\bottomrule
\end{tabular}
\end{threeparttable}

\vskip -0.1in
\end{table*}

\begin{figure}[h]
\centering
\hspace{-2mm}\includegraphics[width=0.8\columnwidth]{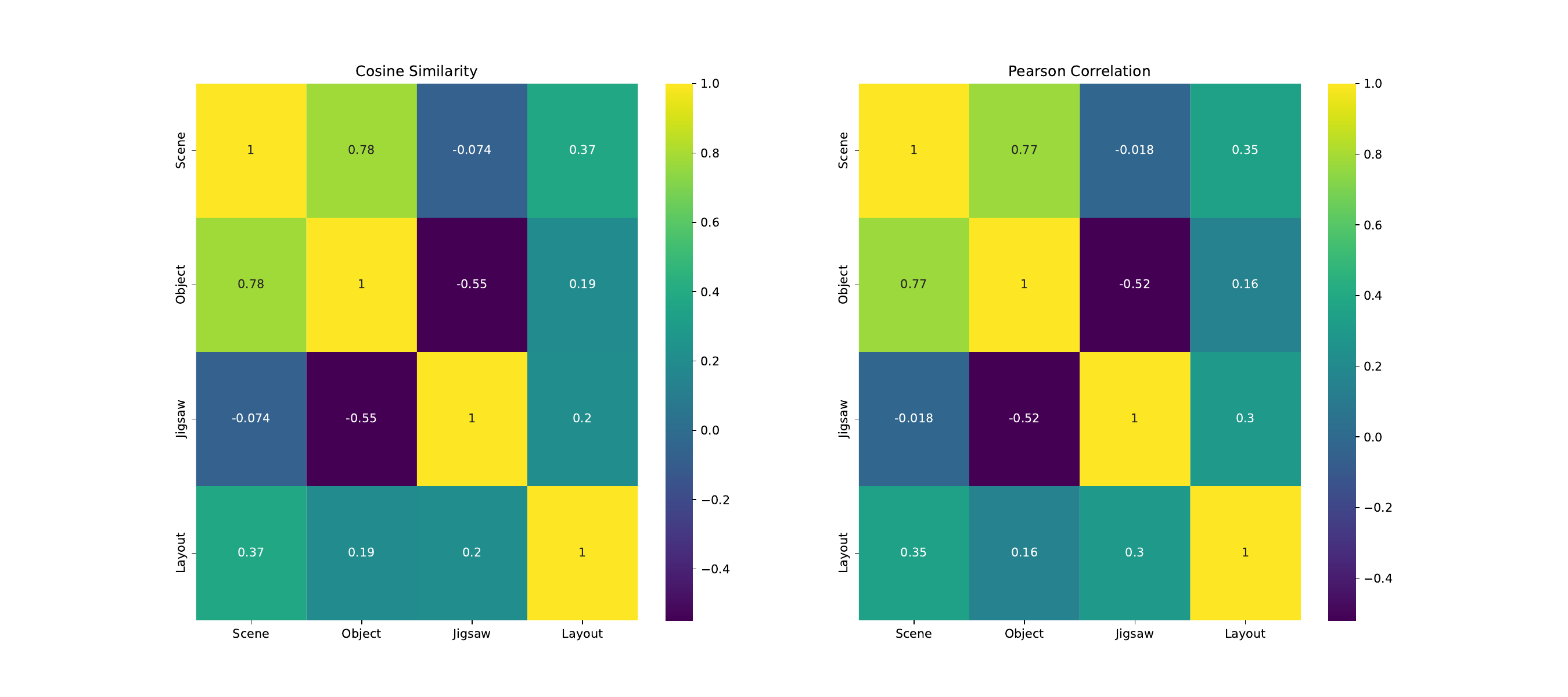}

\caption{Similarity and correlation among the varying optimized linear combination weights on 4 tasks of TransNAS-Bench-101-micro.
}
\label{fig:similarity}
\vskip -0.15in
\end{figure}

\paragraph{Ablation Study on Precision @ $T$}
To substantiate our claims that the weighted linear combination has better estimation ability and Algorithm~\ref{alg:bo-explore} can approximate such optimal weight, we demonstrate the Precision @ 100 for the average value of training-free metrics (i.e., $\E[\rho_{\precision}(\gM, f)]$), the best value of training-free metrics (i.e., $\max(\rho_{\precision}(\gM, f))$), the optimal value achievable through linear combination (i.e., $\rho_{\precision}^*(\gM_{\sM}, f)$), and the average value of our robust estimation metric (i.e., $\E[\rho_{\precision}(\gM(\cdot; \widetilde{\vw}^*), f)]$), as summarized in Table~\ref{tab:precision-tnb101-micro}. The findings indicate that: \textit{(a)} the linear combination can augment the Precision @ 100, surpassing that of any individual training-free metric, and \textit{(b)} the expected Precision @ 100 value of the robust estimation metric exceeds any single training-free metric and is close to the optimal possible value, indicating that the robust estimation metric has more potential to be exploited. 

\begin{table*}[h]
\caption{Values of Precision @ 100 of different estimation metrics on TransNAS-Bench-101-micro}
\label{tab:precision-tnb101-micro}
\centering
\begin{threeparttable}
\begin{tabular}{lcccccc}
\toprule
\multirow{2}{*}{\textbf{Estimation Metrics}} & \multicolumn{5}{c}{\textbf{Precision @ 100 }} \\
\cmidrule(l){2-6} 
& Scene & Object & Jigsaw  & Segment & Normal  \\
\midrule
Average & 0.03 $\pm$ 0.03 & 0.01 $\pm$ 0.02 & 0.01 $\pm$ 0.01 & 0.06 $\pm$ 0.05 & 0.03 $\pm$ 0.01\\
Best & 0.08 & 0.05 & 0.01 & 0.14 & 0.04\\
Optimal & 0.18 & 0.12 & 0.06 & 0.17 & 0.11\\
\alg{} & 0.14 $\pm$ 0.02  & 0.09 $\pm$ 0.01 & 0.04 $\pm$ 0.01 & 0.15 $\pm$ 0.02 & 0.09 $\pm$ 0.03\\
\bottomrule
\end{tabular}
\end{threeparttable}
\vskip -0.1in
\end{table*}

\paragraph{Ablation Study on $T_0$}
As outlined in Section~\ref{sec:futher-discussion}, to examine the impact of $T_0$, we can experiment with a setting where we strictly prescribe the value of $T_0$. Specifically, with a search budget $T = 100$, we select $T_0$ from [30, 50, 75, 100]. For this setup, we only execute Algorithm~\ref{alg:bo-explore} for $T_0$ rounds, and we adjust lines 7-10 in Algorithm~\ref{alg:bo-explore}, so now we will always query the architecture that has not yet been queried and holds the lowest ranking value in $\gM(\cdot; \vw_t)$. After running BO for $T_0$ rounds, we will apply the greedy search for the top $T - T_0$ architectures in $\gM(\cdot; \widetilde{\vw}^*)$, as specified in Section~\ref{sec:exploit}.

Figure~\ref{fig:ablation_study_t0} demonstrates the outcomes of this experiment. The results highlight that when the BO process concludes (i.e., the exploration phase terminates), there's an immediately noticeable improvement compared to extending the search in BO as the greedy search commences (i.e., the exploitation phase kicks off). However, if the estimation metric used by the greedy search does not exhibit enough potential (i.e., the Precision @ 100 value is lower), it may be overtaken by those with greater $T_0$ values. For instance, in the \textit{Scene} task, while $T_0 = 50$ outperforms other search budgets initially, it's eventually eclipsed by $T_0 = 75$ when querying for 100 architectures. Allocating all the budget for exploration (i.e., $T_0 = T$) could potentially yield poorer results, as seen in the \textit{Scene} task. Our original method, \alg{}, typically delivers the best performance, which suggests that reserving the search budget for duplicate queries and applying them in the exploitation phase is a strategic move to enhance performance. Moreover, it boasts the advantage of not having to explicitly stipulate the value of $T_0$ as a hyperparameter—instead, this value is determined by BO itself.
\begin{figure}[h]
\centering
\hspace{-2mm}\includegraphics[width=1\columnwidth]{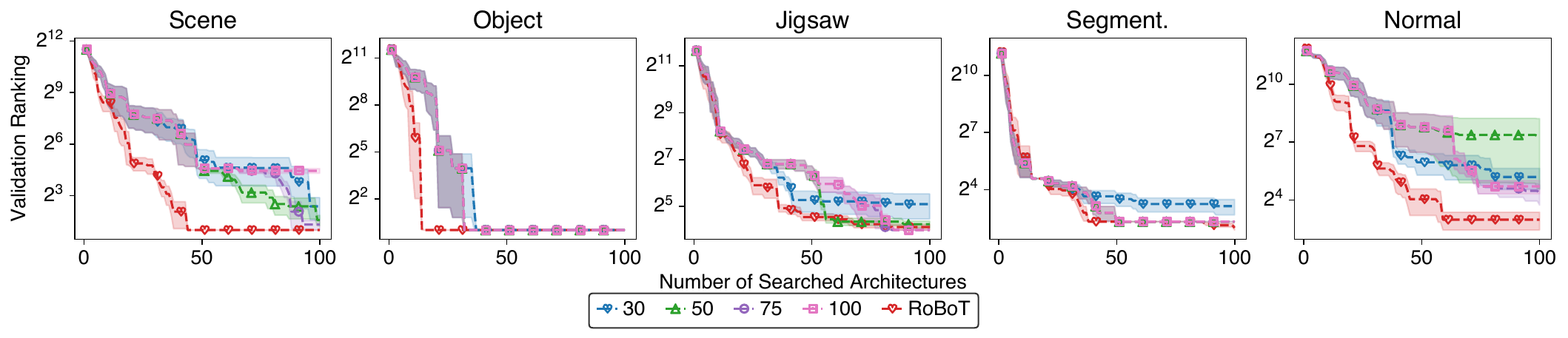}
\caption{Comparison between different values of $T_0$ and \alg{} on 5 tasks in TransNAS-Bench-101-micro regarding the number of searched architectures. Note that all methods are reported with the mean and standard error of 10 independent searches.
}
\label{fig:ablation_study_t0}
\vskip -0.15in
\end{figure}
\paragraph{Ablation Study on Ensemble Method}
In this ablation study, we investigate the influence of different ensemble methods on the performance of \alg{}. The original ensemble method in \alg{} involves a weighted linear combination of normalized training-free metrics, where all these metrics collectively form the hypothesis space. To analyze the impact of this ensemble method, we explore alternative approaches through the following ablation studies: \textit{(a) Without Normalization (w/o Normal.):} In this approach, we directly employ a weighted linear combination of the original values of the training-free metrics to generate estimation metrics. \textit{(b) Uniform Distribution (Uniform Dist.):} The values of the training-free metrics are transformed into corresponding ranking values. To maintain the ranking order of architectures, the highest-scoring architecture is assigned a ranking value of $N - 1$ instead of $1$ for this method, where $N$ is the number of architectures. These transferred ranking values are then used in the weighted linear combination. As the ranking values are uniformly distributed, we refer to this method as Uniform Distribution.
\textit{(c) Normal Distribution (Normal Dist.):} Instead of directly transferring to ranking values, we generate a random normal distribution of $N$ values with a mean of 0 and a standard deviation of 1. These values are then sorted and assigned to the corresponding architectures based on their rankings. It's important to note that the ranking of architectures remains unchanged for all the transferred training-free metrics from the same training-free metric. Thus, all the transferred training-free metrics (from the same training-free metric) have the same Spearman's rank correlation and Kendall rank correlation with the objective evaluation metric.

The results, as shown in Figure~\ref{fig:ablation_study_ensemble}, indicate several key observations. Firstly, compared to using the original values without normalization, the proposed \alg{} demonstrates a faster convergence in finding the optimal weight vector, resulting in a smaller value of $q_T$ in Theorem~\ref{theorem:expectation-of-rank}. This can be attributed to the fact that different training-free metrics often have significantly different magnitudes, and normalization accelerates the optimization process within the BO framework. Secondly, when compared to the Uniform Distribution and Normal Distribution methods that only consider the rankings instead of the original absolute values of the training-free metrics, \alg{} consistently outperforms them. This observation is interesting since the NAS community often relies on Spearman's rank correlation and Kendall's rank correlation to assess the quality of a training-free metric. However, in this study, we find that all transferred training-free metrics have the same correlation with the objective evaluation metric, yet yield significantly different performances. This suggests that while the absolute values may not be crucial when using a single training-free metric in NAS, they play a vital role in combining multiple training-free metrics. This finding suggests the existence of untapped potential in leveraging the absolute values for training-free metric combinations, and we encourage further investigation by the research community. Overall, the proposed ensemble method \alg{}, which involves a weighted linear combination of normalized training-free metrics, consistently achieves the best performance among the alternative approaches considered in the ablation study.
\begin{figure}[h]
\centering
\hspace{-2mm}\includegraphics[width=1\columnwidth]{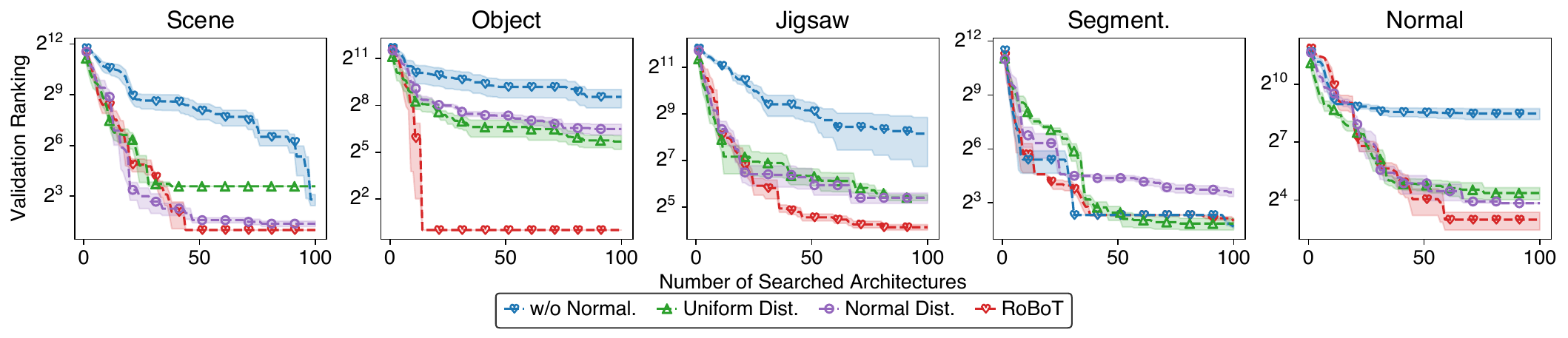}
\caption{Comparison between different ensemble methods and \alg{} on 5 tasks in TransNAS-Bench-101-micro regarding the number of searched architectures. Note that all methods are reported with the mean and standard error of 10 independent searches.
}
\label{fig:ablation_study_ensemble}
\vskip -0.15in
\end{figure}
\paragraph{Ablation Study on Utilized Training-Free Metrics}
In this ablation study, we examine the impact of the training-free metrics used in the linear combination. We consider two scenarios: \textit{(a) More} training-free metrics, where we include \textit{params} and \textit{flops} as additional metrics. This results in a total of 8 training-free metrics for tasks \textit{Scene}, \textit{Object}, and \textit{Jigsaw}, and 7 metrics for tasks \textit{Segment.} and \textit{Normal}. Please note that \textit{synflow} is not applicable for the latter two tasks, as explained in Appendix~\ref{sec:implemtation-details}. \textit{(b) Less} training-free metrics, where we only utilize \textit{grad\_norm}, \textit{snip}, and \textit{grasp} for estimation purposes.

The results depicted in Figure~\ref{fig:ablation_study_metrics} demonstrate interesting findings. When utilizing more training-free metrics, the convergence to the optimal weight vector may take longer (as observed in tasks \textit{Scene} and \textit{Object}), but it has the potential to achieve superior performance (as observed in tasks \textit{Jigsaw}, \textit{Segment.}, and \textit{Normal}). The longer convergence time can be attributed to the richer hypothesis space resulting from the inclusion of more training-free metrics. Moreover, the ability to achieve a higher optimum Precision @ $T$ value $\rho_{\precision}^*(\gM_{\sM}, f)$ is also increased. On the other hand, using fewer training-free metrics generally leads to poorer performance, suggesting that the selected training-free metrics may not perform well in the given task. In practical scenarios where prior knowledge about training-free metric performance is limited, it is recommended to include a broader range of training-free metrics for combination.
\begin{figure}[h]
\centering
\hspace{-2mm}\includegraphics[width=1\columnwidth]{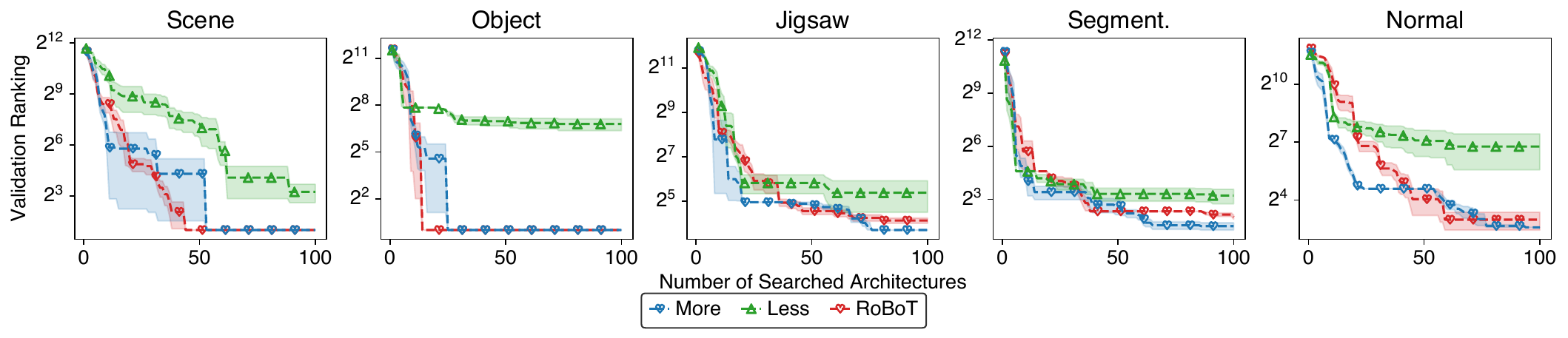}
\caption{Comparison between utilizing more or less training-free metrics on 5 tasks in TransNAS-Bench-101-micro regarding the number of searched architectures. Note that all methods are reported with the mean and standard error of 10 independent searches.
}
\label{fig:ablation_study_metrics}
\vskip -0.15in
\end{figure}
\paragraph{Ablation Study on Observation in Algorithm~\ref{alg:bo-explore}}
In Section~\ref{sec:expected-performance}, we discussed the utilization of the performance of the highest-scoring architecture $\gA(\vw_t)$ as the observation in Algorithm~\ref{alg:bo-explore} and emphasized its role as a partial monitoring of Precision @ $T$. This raises the question of whether directly observing Precision @ $T$ would yield superior results. To explore this, we conduct this ablation study.

The findings, presented in Figure~\ref{fig:ablation_study_observation}, clearly indicate that directly observing Precision @ $T$ outperforms the \alg{} approach. This supports our claim about the partial monitoring nature of using the highest-scoring architecture's performance. However, it's important to note that this ablation study is purely hypothetical since practical implementation requires having the performance of all architectures beforehand to compute the Precision @ $T$ value. Nonetheless, this study provides valuable insights into the observation mechanism employed in Algorithm~\ref{alg:bo-explore} and highlights the practicality and effectiveness of using the performance of the highest-scoring architecture as partial monitoring of Precision @ $T$.

\begin{figure}[h]
\centering
\hspace{-2mm}\includegraphics[width=1\columnwidth]{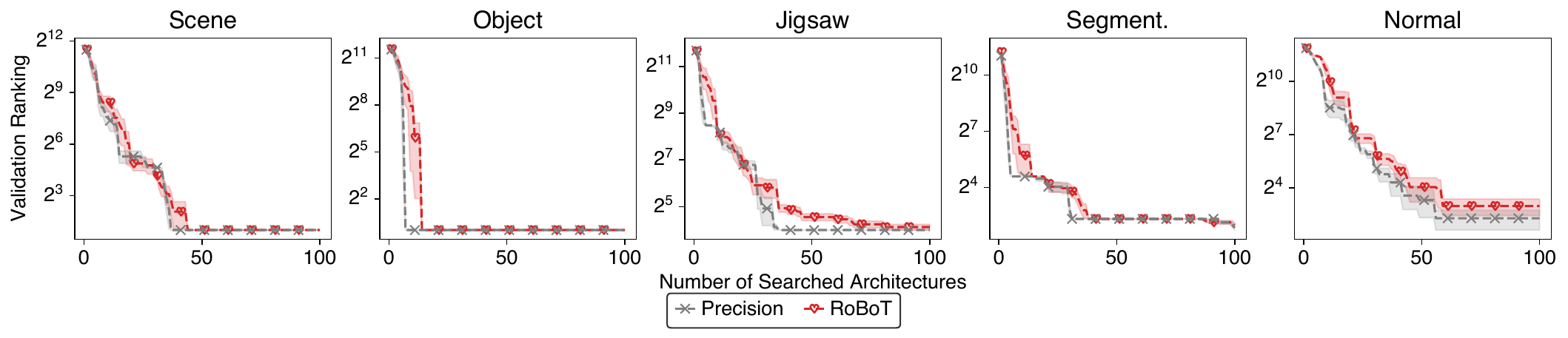}
\caption{Comparison between directly using Precision @ 100 as observation and \alg{} on 5 tasks in TransNAS-Bench-101-micro regarding the number of searched architectures. Note that all methods are reported with the mean and standard error of 10 independent searches.
}
\label{fig:ablation_study_observation}
\vskip -0.15in
\end{figure}

\begin{figure}[t]
\centering
\hspace{-2mm}\includegraphics[width=0.68\columnwidth]{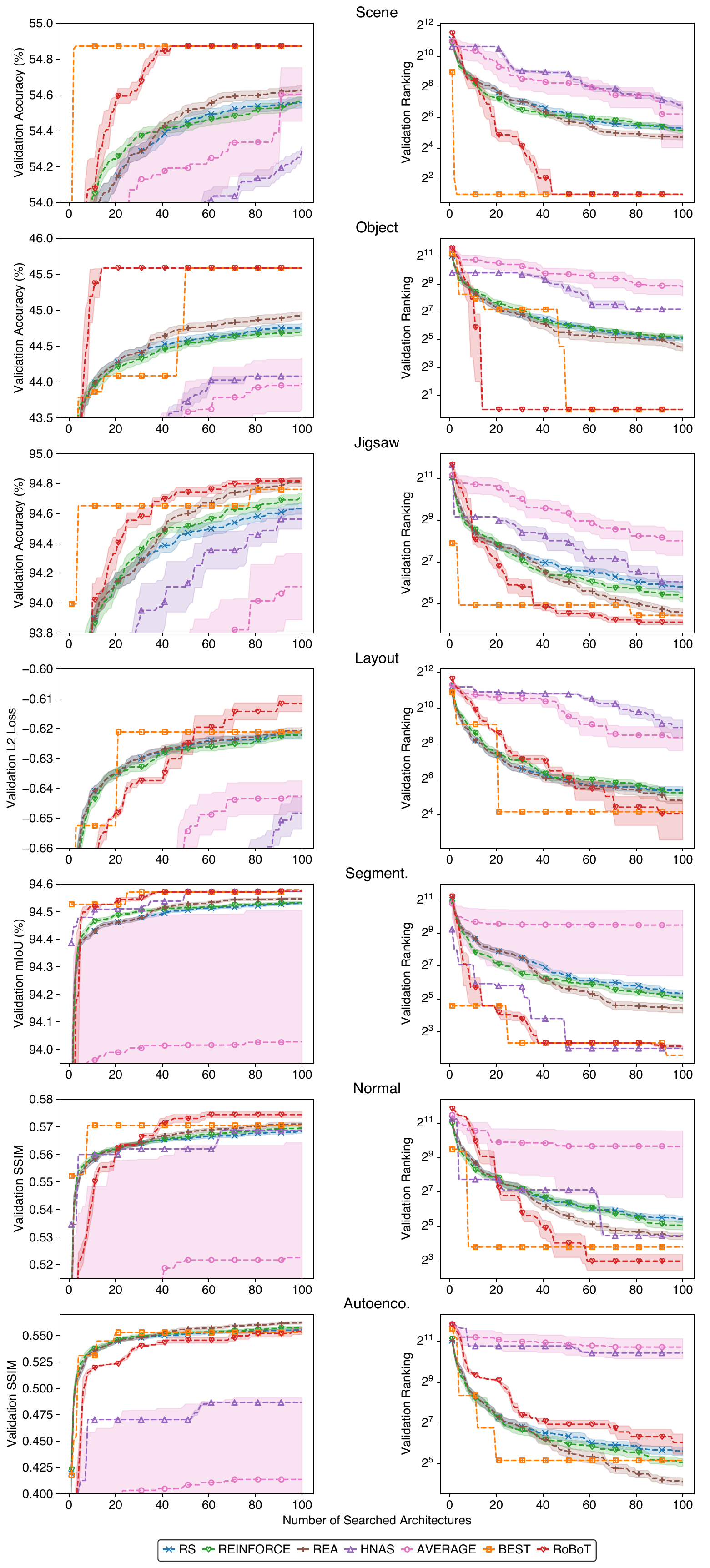}
\caption{Comparison between \alg{} and other NAS baselines in TransNAS-Bench-101-micro regarding the number of searched architectures. Note that \alg{} and HNAS are reported with the mean and standard error of 10 independent searches, while RS, REA, and REINFORCE are reported with 50 independent searches. 
}
\label{fig:more_tnb101_micro}
\vskip -0.15in
\end{figure}

\begin{figure}[t]
\centering
\hspace{-2mm}\includegraphics[width=0.68\columnwidth]{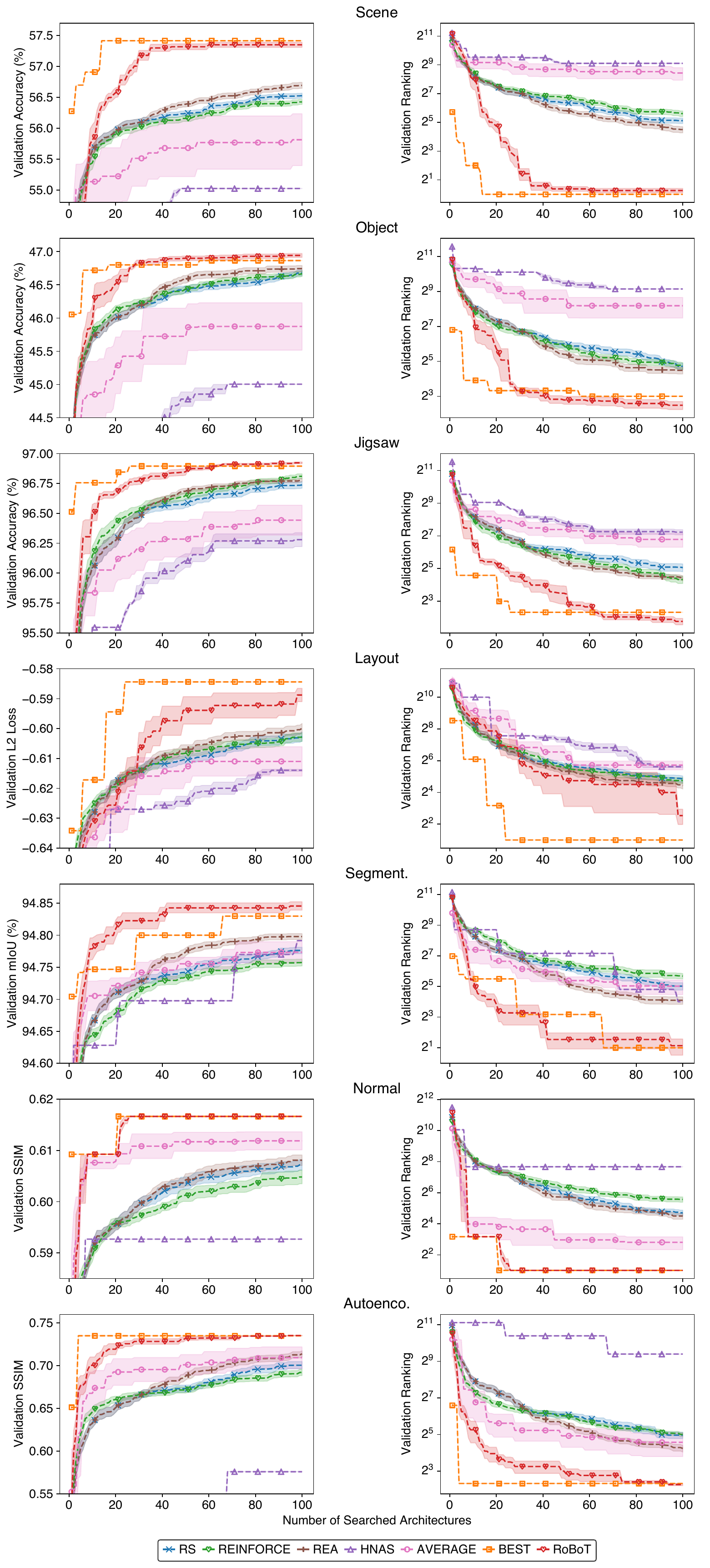}
\caption{Comparison between \alg{} and other NAS baselines in TransNAS-Bench-101-macro regarding the number of searched architectures. Note that \alg{} and HNAS are reported with the mean and standard error of 10 independent searches, while RS, REA and REINFORCE are reported with 50 independent searches. 
}
\label{fig:more_tnb101_macro}
\vskip -0.15in
\end{figure}
\end{appendices}
\end{document}